\title{Best-of-Three-Worlds Linear Bandit Algorithm\\with Variance-Adaptive Regret Bounds}
\newtheorem{theorem}{Theorem}
\newtheorem{lemma}{Lemma}
\theoremstyle{definition}
\newtheorem{definition}{Definition}
\newtheorem{remark}{Remark}
\newcommand{\conv}{\mathrm{conv}}
\newcommand{\cE}{\mathcal{E}}
\newcommand{\cA}{\mathcal{A}}
\newcommand{\cX}{\mathcal{X}}
\newcommand{\cD}{\mathcal{D}}
\newcommand{\Rs}{\mathcal{R}^{\mathrm{sto}}}
\DeclareMathOperator*{\argmin}{arg\,min}
\DeclareMathOperator*{\E}{\mathbf{E}}
\newcommand{\re}{\mathbb{R}}
\newcommand{\linner}{\left\langle}
\newcommand{\rinner}{\right\rangle}
\renewcommand{\theta}{\vartheta}
\renewcommand{\epsilon}{v}
\author{%
Shinji Ito\thanks{NEC Corporation. Email:
\href{mailto:i-shinji@nec.com}{\texttt{i-shinji@nec.com}},
\href{mailto:kei\_takemura@nec.com}{\texttt{kei\_takemura@nec.com}}.
}
\and
Kei Takemura$^*$
}
\begin{document}
\date{}
\maketitle

\begin{abstract}%
	This paper proposes a linear bandit algorithm that is adaptive to environments at two different levels of hierarchy.  At the higher level, the proposed algorithm adapts to a variety of types of environments.  More precisely, it achieves best-of-three-worlds regret bounds, i.e., of ${O}(\sqrt{T \log T})$ for adversarial environments and of $O(\frac{\log T}{\Delta_{\min}} + \sqrt{\frac{C \log T}{\Delta_{\min}}})$ for stochastic environments with adversarial corruptions, where $T$, $\Delta_{\min}$, and $C$ denote, respectively, the time horizon, the minimum sub-optimality gap, and the total amount of the corruption.  Note that polynomial factors in the dimensionality are omitted here.  At the lower level, in each of the adversarial and stochastic regimes, the proposed algorithm adapts to certain environmental characteristics, thereby performing better.  The proposed algorithm has data-dependent regret bounds that depend on all of the cumulative loss for the optimal action, the total quadratic variation, and the path-length of the loss vector sequence.  In addition, for stochastic environments, the proposed algorithm has a variance-adaptive regret bound of $O(\frac{\sigma^2 \log T}{\Delta_{\min}})$ as well, where $\sigma^2$ denotes the maximum variance of the feedback loss.  The proposed algorithm is based on the \texttt{SCRiBLe} algorithm.  By incorporating into this a new technique we call \textit{scaled-up sampling}, we obtain high-level adaptability, and by incorporating the technique of optimistic online learning, we obtain low-level adaptability.
\end{abstract}

\section{Introduction}
This paper considers linear bandit problems.
In this class of problems,
a player chooses,
in each round $t$,
an action $a_t$ from a given \textit{action set} $\cA$,
which is a subset of a $d$-dimensional linear space.
The player then observes the incurred loss $f_t(a_t) \in [-1, 1]$,
where the (conditional) expectation of $f_t$ is assumed to be a linear function,
i.e.,
$f_t$ is expressed as $ f_t(a) = \linner \ell_t, a \rinner + \varepsilon_t (a)$ with some vector $\ell_t \in \re^d$
and some noise $\varepsilon_t$.
The performance of the player is evaluated in terms of of \textit{regret} $R_T$ defined as
$
	R_T( a^* )
	=
	\E \left[
		\sum_{t=1}^T f_t(a_t)
		-
		\sum_{t=1}^T f_t(a^*)
	\right]$
	and
$
R_T =
\max_{a^* \in \cA} R_T(a^*) 
$.

Algorithms for linear bandit problems have been proposed mainly for two different types of environments:
\textit{stochastic} and \textit{adversarial}.
In stochastic environments,
$\{ f_t \}$ are assumed to follow an unknown distribution $\cD$ independently for all $t$.
Consequently,
we may assume that
there exists $\ell^* \in \re^d$ such that $\ell_t = \ell^*$ and
$\varepsilon_t(a)$ follows an identical distribution for all $t$.\footnote{
	In standard stochastic settings,
	it is assumed that $\varepsilon_t(a)$ follows a zero-mean distribution for all $a \in \cA$.
	Our proposed algorithm,
	however,
	works well under milder assumptions,
	details of which are given in Section \ref{sec:setup} and Remark \ref{rem:sto}.
}
In adversarial environments,
the distributions of $f_t$ (and thus also $\ell_t$) are decided arbitrarily depending on the action sequence $(a_s)_{s=1}^{t-1}$ that the player has chosen so far.

What we can do in linear bandit problems varies greatly depending on the type of environment.
For stochastic environments,
it is known that the optimal regret is of $\Theta ( \log T )$ \citep{lattimore2017end},
ignoring the factor dependent on $d, \cA$ and $\ell^*$.
For adversarial environments,
the mini-max optimal regret is $\tilde{\Theta}( d \sqrt{T})$ \citep{bubeck2012towards},
where we ignore poly-logarithmic factors with respect to $d$ and $T$
in the notation of $\tilde{O}, \tilde{\Omega}$ and $\tilde{\Theta}$.
A class of intermediate settings between these two types of environments are called
stochastic environments with adversarial corruption \citep{lykouris2018stochastic},
or corrupted stochastic environments.
Environments in this regime are parametrized by corruption level $C$,
which measures the amount of the adversarial component.
For this setting,
an algorithm achieving $O((\log T)^2 + C)$-regret has been proposed \citep{lee2021achieving}.

The aim of this paper is to make possible the construction of \textit{adaptive} algorithms that
automatically exploit certain specific characteristics of environments.
In existing studies of bandit algorithms,
the concept of adaptability has been considered at two different levels,
regarding which we here refer to high-level adaptability and low-level adaptability.
Algorithms with high-level adaptability are designed to work well for different types of environments,
e.g.,
stochastic and adversarial types.
Algorithms with low-level adaptability perform better in specific individual environments
by exploiting certain favorable characteristics that they possess,
e.g.,
small cumulative loss or small variance in loss sequences.

High-level-adaptive bandit algorithms that perform (nearly) optimally for both stochastic and adversarial environments
are called best-of-both-worlds (BOBW) algorithms \citep{bubeck2012best}.
Among such algorithms,
those that can adapt to corrupted stochastic environments
are referred to as best-of-all-worlds \citep{erez2021towards} or best-of-three-worlds (BOTW) algorithms \citep{lee2021achieving}.
For linear bandit problems,
\citet{lee2021achieving} provide a best-of-three-worlds algorithm
that achieves regret bounds of $O((\log T)^2)$ for stochastic environments,
of $\tilde{O}(\sqrt{T})$ for adversarial environments,
and of ${O}((\log T)^2 + C)$ for corrupted stochastic environments.

Various types of low-level-adaptive algorithms have been considered for adversarial bandit problems.
Representative examples are algorithms with $\tilde{O}(\sqrt{L^*})$-regret,
where $L^*$ represents the cumulative loss for the optimal action;
such examples are said to have \textit{first-order} regret bounds.
In addition to such algorithms,
\citet{hazan2011better} proposed an algorithm with a \textit{second-order} regret bound of $\tilde{O}(\sqrt{Q})$ that depends on the total quadratic variation $Q$ of loss vectors.
An algorithm by \citet{ito2021hybrid} achieves $\tilde{O}(\sqrt{ \min\{ L^*, Q, P \} })$-regret,
which means that the algorithm simultaneously has first-order and second-order bounds
as well as a bound depending on the path-length $P$ of the loss sequence.
These regret bounds,
which are referred to as \textit{data-dependent regret bounds},
imply that algorithm performance can be improved by exploiting certain environmental characteristics that are common in applications,
such as small variations in loss sequences or sparsity of loss.
For the stochastic multi-armed bandit problem,
\citet{audibert2007tuning} proposed an algorithm with an $O( \sum_{i} (\frac{\sigma_i^2}{\Delta_i}+1) \log T)$-regret bound that depended
not only on the sub-optimality gap $\Delta_i$ but also on the variance $\sigma_i^2$ of the loss.
We refer to such bounds as \textit{variance-adaptive} bounds,
and they can be considered to represent low-level adaptability in stochastic regimes.

\subsection{Contribution of this work}
The main contribution of this paper is the proposal of a linear bandit algorithm that combines
high-level adaptability and low-level adaptability.
It is a BOTW algorithm
that achieves regret bounds of $O( \log T )$ in stochastic environments,
$\tilde{O}(\sqrt{T})$ in adversarial environments,
and $O( \log T + \sqrt{ C \log T } )$ in corrupted stochastic environments,
ignoring factors depending on $d, \cA$ and $\ell^*$.
Further,
the algorithm achieves first-order, second-order, and path-length bounds in adversarial environments.
Simultaneously,
it has variance-adaptive regret bounds for (corrupted) stochastic environments.

The proposed algorithm (Algorithm~\ref{alg:FTRL}) follows the approach of \texttt{SCRiBLe} \citep{abernethy2008competing,abernethy2012interior}
which stands for \textit{Self-Concordant Regularization in Bandit Learning}.
This approach uses a class of functions known as \textit{self-concordant barriers} \citep{nesterov1994interior}
as regularizers.
Self-concordant barriers are characterized with a parameter $\theta \geq 1$ that can be assumed to satisfy $\theta = O(d)$,
details of which are given in Section~\ref{sec:self-concordant}.
The regret bounds of our algorithm can be expressed with parameters explained in Table~\ref{table:parameters},
including the parameter $\theta$,
as follows:
\begin{theorem}[informal]
	\label{thm:main}
	In adversarial environments with $\varepsilon_t(a) = 0$,
	the regret for Algorithm~\ref{alg:FTRL} is bounded as
	$
		R_T = O \left( d \sqrt{\theta \min \{T, Q, P \} \log T} \right)
	$.
	Further,
	if $f_t(a) \geq 0$ for all $a \in \cA$ and $t \in [T]$,
	we have $
		R_T = O \left( d \sqrt{ \theta L^* \log T} \right)
	$.
	In stochastic environments (i.e., if $\ell_t = \ell^*$ for all $t$),
	we have
	$
		R_T 
		= O \left(
				(\frac{ d \sigma^2}{\Delta_{\min}} + 1)  d \theta \log T
		\right)$.
	In corrupted stochastic environments with the corruption level $C = \sum_{t=1}^T \| \ell_t - \ell^* \|_2$,
	we have $R_T 
		= O \left(
				(\frac{ d \sigma^2}{\Delta_{\min}} + 1)  d \theta \log T
			+
			\sqrt{
				C \cdot (\frac{ \sigma^2}{\Delta_{\min}} + 1)  d^2 \theta \log T
			}
		\right)$.
\end{theorem}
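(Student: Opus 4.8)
The plan is to obtain all four bounds from a single master inequality for the optimistic FTRL update with a $\theta$-self-concordant barrier, specialized regime by regime. First I would use the standard FTRL analysis to split $R_T$ into a \emph{penalty} term and a \emph{stability} term. For a $\theta$-self-concordant barrier the penalty is of order $\theta\log T/\eta$, since the barrier grows by only $O(\theta\log T)$ along the iterate path when the feasible region is shrunk by $1-1/T$, and $\eta$ is the learning rate. Because the algorithm is optimistic, the stability term is driven by the prediction error $\hat\ell_t - m_t$ measured in the local dual norm $\|\cdot\|_{H_t^{-1}}$ of the barrier Hessian $H_t$, so the master inequality reads $R_T \le \frac{c_1\theta\log T}{\eta} + c_2\eta\sum_{t}\E[\|\hat\ell_t - m_t\|_{H_t^{-1}}^2]$. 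The first task is to control each summand with the second-moment guarantee of the scaled-up-sampling estimator, which contributes the factor $d^2$ and reduces $\E[\|\hat\ell_t - m_t\|_{H_t^{-1}}^2]$ to essentially $d^2$ times the realized squared prediction error of the feedback. The local-norm stability condition also forces $\eta \le 1/(2d)$, and this cap is exactly what produces the additive $d\theta\log T$ terms in the stochastic bounds.

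For the two adversarial claims I would choose the optimistic predictor $m_t$ to be the previous estimate, so that the telescoped prediction errors are bounded simultaneously by the quadratic variation $Q$ and by the path-length $P$, while $|f_t|\le 1$ supplies the trivial $T$ alternative. Balancing the (adaptively tuned) $\eta$ against the penalty then yields $R_T = O(d\sqrt{\theta\min\{T,Q,P\}\log T})$. Under the nonnegativity assumption $f_t(a)\ge 0$, the squared feedback is dominated by the feedback itself, so the summed driver is at most the realized cumulative loss; the usual adaptive-learning-rate argument converts $\sqrt{\sum_t f_t(a_t)}$ into $\sqrt{L^*}$ up to lower-order FTRL slack, giving $R_T=O(d\sqrt{\theta L^*\log T})$.

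For the stochastic and corrupted claims I would deploy the self-bounding technique. The role of optimism here is to cancel the predictable signal: with $m_t$ tracking $\ell^*$, the prediction error collapses to the noise, so after taking expectations the stability driver is essentially $d^2\sigma^2$ times a suboptimality-weighted quantity $Z = \sum_t \E[z_t]$, where $z_t$ measures the mass the iterate places away from $a^*$ and, through the self-concordant geometry, is comparable to $\|a_t-a^*\|^2$ in the local norm. The master inequality then becomes $R_T \le \frac{c_1\theta\log T}{\eta} + c_2\eta\, d^2\sigma^2 Z$. The key self-bounding observation is that $Z$ is itself controlled by the regret, $\Delta_{\min}Z \lesssim R_T$, since mass on suboptimal actions costs at least $\Delta_{\min}$ per unit. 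Substituting and tuning $\eta = \min\{1/(2d),\ \Delta_{\min}/(c\, d^2\sigma^2)\}$ closes the loop and gives $R_T = O((\frac{d\sigma^2}{\Delta_{\min}}+1)d\theta\log T)$, the two terms arising respectively from the variance-governed and the capped-$\eta$ regimes.

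The corrupted case is the main obstacle, and it is where the $\sqrt{C}$ term appears. Because the iterate dynamics see $\ell_t$ rather than $\ell^*$, relating $Z$ to the true regret now costs the corruption budget: the self-bound degrades to $\Delta_{\min}Z \lesssim R_T + C$, with $C = \sum_t\|\ell_t-\ell^*\|_2$ entering through $\sum_t\|\ell_t-\ell^*\|_2\,\|a_t-a^*\|_2$. Writing the tuned master bound in the balanced form $R_T \le 2\sqrt{A B Z}$ with $A=O(\theta\log T)$ and $B=O(d^2\sigma^2)$, and substituting $Z \lesssim (R_T+C)/\Delta_{\min}$, produces a quadratic inequality in $R_T$ whose solution splits into a linear term $O(\frac{AB}{\Delta_{\min}})$ and a corruption term $O(\sqrt{\frac{ABC}{\Delta_{\min}}})$, matching the claimed $R_T = O((\frac{d\sigma^2}{\Delta_{\min}}+1)d\theta\log T + \sqrt{C(\frac{\sigma^2}{\Delta_{\min}}+1)d^2\theta\log T})$. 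The delicate part throughout is that a single data-driven learning rate and sampling scale must make every one of these balancings hold at once: I expect the bulk of the work to lie in verifying that the adaptive $\eta$ simultaneously realizes the adversarial $\sqrt{\cdot}$ rates and admits the stochastic self-bounding solve, and in making the variance isolation rigorous against the self-concordant local geometry near the optimal vertex.
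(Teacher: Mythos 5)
Your skeleton matches the paper's: an FTRL stability--penalty decomposition under a $\theta$-self-concordant barrier giving roughly $R_T \lesssim d\sqrt{\theta\log T\sum_t g_t(m_t)}+d\theta\log T$ with $g_t(m)=b_t(\linner a_t,m\rinner-f_t(a_t))^2$, variance reduction by a factor $r_t=\alpha_t^{-1}$ from scaled-up sampling, and a self-bounding solve for the stochastic and corrupted regimes. However, there is a genuine gap in how you handle the optimistic predictor $m_t$, and it is precisely the ingredient that makes all four bounds hold for \emph{one} algorithm. You choose $m_t$ to be ``the previous estimate'' for the adversarial claims and have $m_t$ ``tracking $\ell^*$'' for the stochastic ones; but the algorithm cannot switch predictors by regime, and the natural instantiation $m_t=\hat{\ell}_{t-1}$ fails outright in the bandit setting: $\hat{\ell}_{t-1}-m_{t-1}$ is scaled by $d\lambda_{i_{t-1}}^{1/2}$, where $\lambda_i$ are eigenvalues of $\nabla^2\psi(x_{t-1})$ that blow up near the boundary of $\cX$, so $\E[(\linner a_t,\hat{\ell}_{t-1}\rinner-f_t(a_t))^2]$ is governed by the estimator's variance --- the very quantity the optimistic prediction is supposed to suppress --- and is not bounded by $Q$ or $P$. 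The paper resolves this by \emph{learning} $m_t$: it runs online projected gradient descent on the observed prediction errors $g_t$ (update \eqref{eq:updatemt}), and the tracking-linear-predictors guarantee (Lemma~\ref{lem:TLE}) holds against \emph{every} comparator sequence $(u_t)$ simultaneously; substituting $u_t=\bar{\ell}$, $u_t=\ell_t$, $u_t=0$, and $u_t=\ell^*$ in one and the same inequality yields the $Q$, $P$, $L^*$, and stochastic/corrupted bounds respectively. Without this (or an equivalent) mechanism, your per-regime choices of $m_t$ do not assemble into a single best-of-three-worlds algorithm.

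A secondary imprecision concerns the stochastic self-bound. Your quantity $Z$, described as comparable to $\|a_t-a^*\|^2$ in the local norm, is in the paper the sampling scale $r_t$, and the crucial geometric fact (Lemma~\ref{lem:boundgamma}) is $r_t\le 2\kappa\,\Delta(x_t)/\Delta_{\min}$ --- linear in the iterate's suboptimality gap, not quadratic in a distance --- proved via the Minkowski function $\pi_{a^*,\cX}$ and the inclusion $W_1(x_t)\subseteq\cX$, and valid only under the assumption \eqref{eq:aspzt} that the reference point $z_t$ approximately minimizes $\max_{x\in\cE_t}\pi_{z,\cX}(x)$. Relatedly, your closing tuning $\eta=\min\{1/(2d),\Delta_{\min}/(c\,d^2\sigma^2)\}$ presumes knowledge of $\Delta_{\min}$ and $\sigma^2$; the paper instead uses the data-driven rate $\beta_t\propto\sqrt{\sum_{s<t}g_s(m_s)}$ of \eqref{eq:defbetat}, so that the regret bound of Lemma~\ref{lem:RTgt} is regime-independent and the self-bounding argument closes purely in the analysis, never in the algorithm's parameters.
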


\begin{table}[t]
	\caption{List of parameters in regret bounds.}
	\label{table:parameters}
	\centering
	\begin{tabular}{lllc}
		\toprule
		Parameter & Description
		\\
		\midrule
		$T \in \mathbb{N}$ & time horizon
		\\
		$d \in \mathbb{N}$ & dimensionality of action set
		\\
		$\cA \subseteq \re^d$ & action set (One may assume $\log (|\cA|) = O(d \log T)$ w.l.o.g.)
		\\
		$\theta \geq 1$ & parameter of a self-concordant barrier $\psi$ used in the algorithm
		\\
		&
		(One can choose $\psi$ so that $\theta = O(d)$)
		\\
		$ \Delta_{\min} > 0 $ & minimum suboptimality gap: $\Delta_{\min} = \min_{a \in \cA \setminus \{ a^* \}} \linner \ell^*, a - a^* \rinner$
		\\
		$ c^* > 0$ & asymptotic lower bound parameter: $c^* = c(\cA, \ell^*) = O(d / \Delta_{\min})$
		\\
		$\sigma^2 \geq 0$ & maximum variance of loss: $\sigma^2 = \max_{a \in \cA, t} \E \left[ (f_t(a) - \linner \ell^*, a \rinner)^2 \right]$
		\\
		$L^* \geq 0$  & minimum cumulative loss: $L^* = \min_{a^* \in \cA} \E \left[ \sum_{t=1}^T f_t(a^*) \right] $
		\\
		$Q \geq 0$ & total quadratic variation in loss sequence: $Q = \min_{\bar{\ell}\in \re^d} \E \left[ \sum_{t=1}^T \left\| \ell_t - \bar{\ell} \right\|_2^2 \right]$
		\\
		$P \geq 0$ & path-length of loss sequence: $P = \E \left[ \sum_{t=1}^{T-1} \left\| \ell_t - \ell_{t+1} \right\|_2 \right]$
		\\
		\bottomrule
	\end{tabular}
\end{table}
\begin{table}[t]%
\caption{
  Regret bounds for stochastic and adversarial linear bandits.
  Bounds depending on $L^*$ are applicable when $f_t(a) \geq 0$.
  Bounds depending on $Q$ or $P$ are applicable when $\varepsilon_t(a) = 0$.
}
  \label{table:regretbound}
    \centering
    \begin{tabular}{lcccccccccccc}
        \toprule
        Algorithm & Stochastic & Adversarial
        \\
        \midrule
        \cite{bubeck2012towards} &  & $O\left( \sqrt{d T \log (|\cA|)} \right) $
        \\
        \cite{abernethy2008competing} &  & $O\left( d \sqrt{ \theta T \log T}\right)$
        \\
        \cite{ito2021hybrid}&  & ${O} \left( d \sqrt{ \min \{ T, L^*, Q, P \} }(\log T)^2 \right)$
	\\
	\cite{lattimore2017end}
	&
	$ c^* \log T + o \left( \log T \right)$
	&
        \\
        \cite{lee2021achieving}
        &
        $O \left( c^*  \log ( T |\cA| ) \log T \right)$
        &
        $O \left(  \sqrt{ d T} \log (T |\cA| \log T) \right)$
        \\
        \textbf{[This work]}
        &
        $O \left( (\frac{d \sigma^2}{\Delta_{\min}} + 1) d \theta \log T \right)$
        &
        $O \left( d \sqrt{\theta \min\{ T, L^*, Q, P \}  \log T}  \right) $
        \\
        \bottomrule
    \end{tabular}
\end{table}%

Table~\ref{table:regretbound} provides a comparison of our regret bounds with those in existing studies.
For stochastic settings,
the tight asymptotic regret given $\cA$ and $\ell^*$ can be characterized with $c^* = c(\cA, \ell^*)$,
a definition of which can be found in,
e.g.,
the paper by \citet{lattimore2017end}.
They have provided an algorithm that achieves an asymptotically optimal regret bound of
$R_T = c^* \log T + o(\log T)$.
However,
such asymptotically optimal algorithms are not necessarily optimal in environments with small variance $\sigma^2$.
In the case of $c^* = \Omega\left((\frac{d \sigma^2}{\Delta_{\min}} + 1 ) d \theta \right)$,
the proposed algorithm provides a better regret bound.
We would also like to emphasize the fact that our stochastic regret bound includes only a single $\log T$ factor,
while the bound by the BOTW algorithm of \citet{lee2021achieving} includes a $(\log T)^2$ factor.

For adversarial environments,
\citet{ito2021hybrid} has provided an algorithm with data-dependent regret bounds that depend on $L^*$, $Q$, and $P$ simultaneously.
In this regard,
our regret bounds here have an additional factor of $\sqrt{d \theta}$ but are better in terms of the dependency w.r.t.~$\log T$.
Our regret bounds can be better than those with BOTW algorithm by \citet{lee2021achieving}
if the loss sequence satisfies $\min\{ L^*, Q, P \} = O\left(T \log (T |\cA|) / \sqrt{d \theta \log T}\right)$.

For corrupted stochastic environments,
the BOTW algorithm by \citet{lee2021achieving} achieves a regret bound of
$O \left( \frac{d  \log (T |\cA|) \log T}{\Delta_{\min}} + C \right)$,
while our bound is
$O \left( \Rs + \sqrt{ \Rs  C } \right) $,
where $\Rs$ satisfies
$\Rs \leq O \left( (\frac{\sigma^2}{\Delta_{\min}} + 1) d^2 \theta\log T \right)$.
As
$ \sqrt{ \Rs  C }
\leq \frac{1}{2}( \Rs + C) $
follows from the AM-GM inequality,
our algorithm also implies $R_T = O \left( \Rs + C \right) $,
which is superior to the bound by \citet{lee2021achieving} when
$\sigma^2 + \Delta_{\min} = O \left( \frac{\log (T |\cA| )}{d \theta} \right)$.
We would like to stress here that the impact of corruption on the performance of our algorithm
is only of a square-root factor in $C$,
while algorithms in existing studies \citep{lee2021achieving,li2019stochastic,bogunovic2021stochastic} include at least a linear factor in $C$.
Comparison of such results w.r.t.~corrupted settings,
however,
requires particular care,
as there are differences in the details of problem settings.
\begin{remark}
	\upshape
	In this paper,
	regret is defined in terms of loss \textit{including} corruption,
	while existing studies define regret in terms of loss \textit{without} corruption.
	As the difference between these two notions of regret is at most $O(C)$,
	our algorithm enjoys $O \left( \Rs + C \right) $-bound even in terms of the latter definition of regret.
	Such a difference in models has been discussed by \citet{gupta2019better}.
\end{remark}

The main innovations for achieving high-level adaptability (i.e., the BOTW property) are with regard to the sampling method for actions.
In a previous study by \citet{abernethy2008efficient},
they compute a point $x_t$ in the convex hull $\conv (\cA)$ of the action set $\cA$ using a follow-the-regularized-leader (FTRL) approach,
and they then pick $a_t$ from the \textit{Dikin ellipsoid} $W_1(x_t) \subseteq \conv(\cA)$ that is defined from the self-concordant barrier for $\conv(\cA)$.
Here,
the action $a_t$ must be sampled so that its expectation matches $x_t$.
In addition,
the larger the variance of $a_t$,
the better estimator $\hat{\ell}_t$ for $\ell_t$ that we can construct,
i.e.,
the smaller variance of $\hat{\ell}_t$.
In this paper,
in order to improve the variance of the loss estimator,
we introduce a new technique that we refer to as \textit{scaled-up sampling}
(see Figure~\ref{fig:sampling}).
In this approach,
we construct a scaled-up set $W' \subseteq \conv(\cA)$ of the Dikin ellipsoid $W_1(x_t)$ with a reference point $z_t \in \cA$,
for which we let $\alpha_t \geq 1$ denote the scaling factor.
Rather than sampling from $W_1(x_t)$ as is done in the previous study,
we pick $a_t$ from $W'$ with probability $\alpha_t^{-1}$,
and otherwise set $a_t = z_t$ (the expectation of $a_t$ then matches $x_t$ as well).
Consequently,
the variance of $a_t$ becomes $\alpha_t$ times larger and
the variance of the loss-vector estimator becomes $\alpha_t^{-1}$ times smaller,
which contributes to the improvement of the regret upper bound.
In stochastic environments in particular,
intuitively,
$x_t$ approaches an extreme point (a truly optimal solution),
allowing for a smaller $W_1(x_t)$ and a larger value of $\alpha_t$,
which leads to a significant improvement in regret.

\begin{figure}
	\centering
	\includegraphics[scale=0.6]{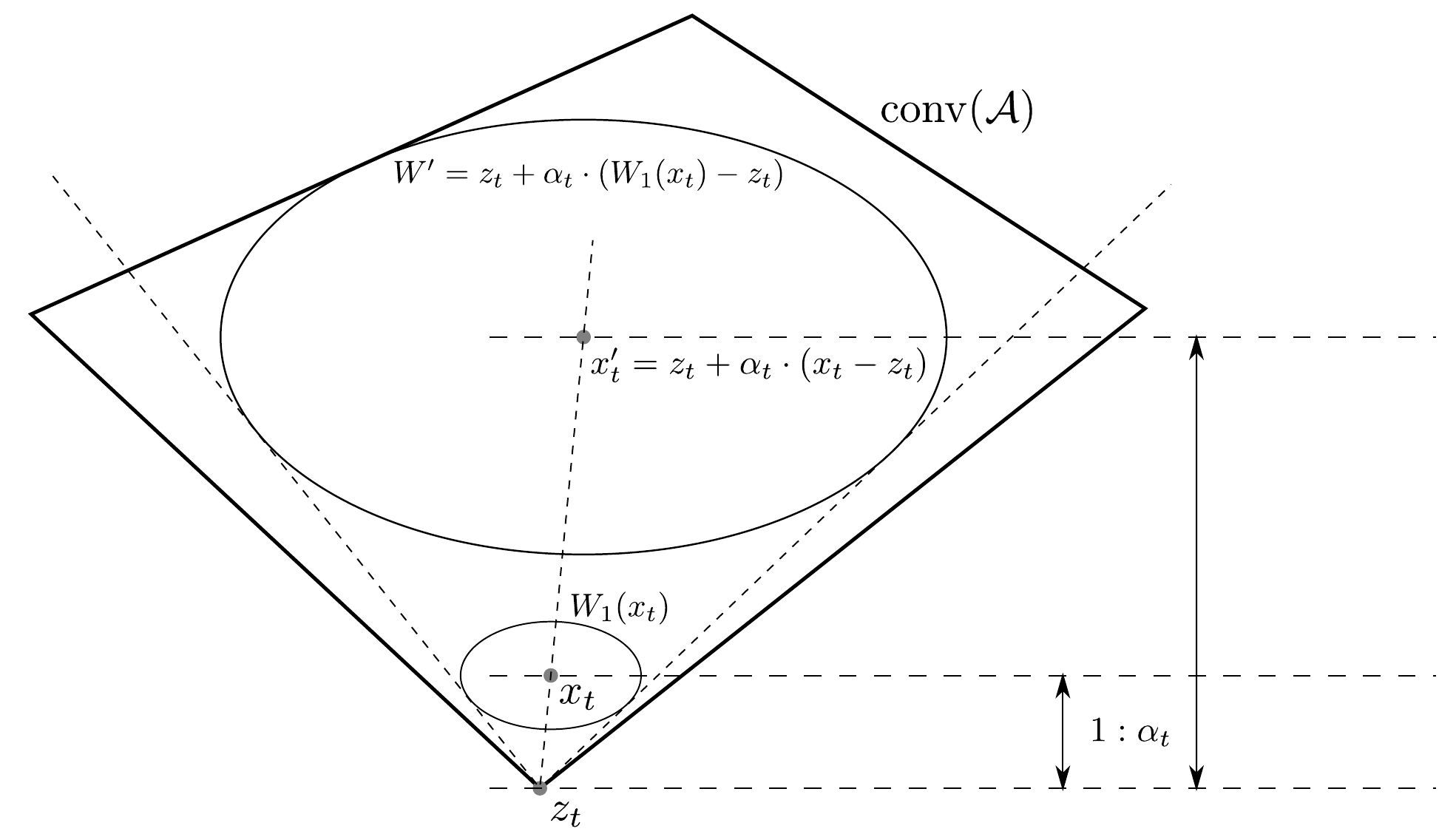}
	\caption{Illustration of scaled-up sampling.}
	\label{fig:sampling}
\end{figure}

In proving the high-level adaptability,
we use the self-bounding technique \citep{zimmert2021tsallis,wei2018more}.
We first show that the proposed algorithm,
an FTRL method with scaled-up sampling and an adaptive learning rate,
achieves a regret bound of $R_T = O\left( d \sqrt{ \theta \sum_{t=1}^T \alpha_t^{-1} \log T}\right)$.
We further show that $\alpha_t^{-1} = O(\Delta(x_t) / \Delta_{\min}) $ holds in any stochastic environment,
where $\Delta(x_t)$ denotes the round-wise regret caused by choosing $x_t$.
Combining these two facts,
we can obtain $R_T = O\left( d \sqrt{\theta \sum_{t=1}^T \Delta(x_t) \Delta_{\min}^{-1} \log T } \right)
= O\left( d \sqrt{\theta R_T \Delta_{\min}^{-1} \log T } \right)$,
which immediately leads to $R_T = O(d^2 \theta \Delta_{\min}^{-1} \log T)$ in stochastic environments.
As has been done in previous analyses using the self-bounding technique,
we can prove improved regret bounds for \textit{the stochastically constrained adversarial regime} \citep{zimmert2021tsallis} as well,
which includes corrupted stochastic environments.

To achieve low-level adaptability (i.e., data-dependent bounds in adversarial environments and variance-adaptive bounds in stochastic environments),
we employ the framework of \textit{optimistic online learning}~\citep{rakhlin2013online}.
This framework incorporates \textit{optimistic prediction} $m_t$ for $\ell_t$ into online learning algorithms,
thereby providing regret bounds depending on $( \linner \ell_t - m_t , a_t \rinner )^2$
rather than $(\linner \ell_t, a_t \rinner )^2$.
The proposed algorithm determines $m_t$ by means of the technique of \textit{tracking the best linear predictor},
which leads the hybrid data-dependent bounds and variance-adaptive bounds.
Similar approaches can be found in \citep{ito2021hybrid,ito2022adversarially}.

\subsection{Limitation of this work and future work}

We should note the issue of computational complexity w.r.t.~the proposed algorithm.
In the proof of $O(\log T)$-regret bounds for (corrupted) stochastic environments,
we need the assumption that the reference point $z_t$,
illustrated in Figure~\ref{fig:sampling},
is chosen so that the scaling factor $\alpha_t$ is (approximately) maximized.
We have not,
however,
found an efficient method for computing such a point $z_t$.
A naive method for computing such a $z_t$ requires a computational time of at least $\Omega(|\cA|)$,
which is highly expensive,
e.g.,
as in most examples of combinatorial bandits~\citep{cesa2012combinatorial}.
Resolving this issue of computational complexity will be important in future work.

There is still some room for improvement in terms of regret bounds as well.
As can be seen from Example 4 by \citet{lattimore2017end},
the gap between $c^*$ and $d/\Delta_{\min}$ can be arbitrarily large,
which implies that our stochastic regret bound is much larger than the lower bound in the worst case.
We also note that our regret bounds only hold in expectation
while regret guarantees by \citet{lee2021achieving} hold with high probability.
If we pursue high probability bounds as well,
we cannot avoid an extra $O(\log T)$ factor,
as discussed in their Appendix D,

\section{Preliminary}
\subsection{Problem setup}
\label{sec:setup}
This section introduces the setup of the linear bandit problems dealt with in this paper.
Before a game starts,
the player is given the \textit{time horizon} $T$ and
an \textit{action set} $\cA \subseteq \re^d$,
a closed and bounded set of $d$-dimensional vectors.
Without loss of generality,
we assume that $\cA$ is not included in any proper affine subspace of $\re^d$.
We also assume that all points in $\cA$ have $L_2$ norm of at most $1$,
i.e.,
$\cA \subseteq B_{2}^d(1)$,
where $B_2^{d}(r)$ denotes an $L_2$ ball of the radius $r$:
$B_2^{d}(r) = \{ x \in \re^d \mid \| x \|_2 \leq r \}$.
In each round $t = 1, 2, \ldots, T$,
the environment determines a \textit{loss function} $f_t: \cA \rightarrow [-1, 1]$,
and the player then chooses an \textit{action} $a_t \in \cA$ without knowing $f_t$.
After that,
the player observes the incurred loss $f_t(a_t)$.
The loss function $f_t$ can be chosen depending on the actions $(a_s)_{s=1}^{t-1}$ selected so far.
We assume that the conditional expectation of $f_t$ given $(a_s)_{s=1}^{t-1}$ is an affine function,
i.e.,
there exists $\ell_t \in \re^d$,
which is referred to as a \textit{loss vector},
and $\xi_t \in \re$ such that $f_t$ is expressed as
\begin{align}
	f_t(a) = \linner \ell_t, a \rinner + \varepsilon_t(a),
	\quad
	\mbox{where}
	~
	\E \left[ \varepsilon_t(a) | (a_s)_{s=1}^{t-1} \right] = \xi_t
	~ \mbox{for all}~
	a \in \cA.
\end{align}
This paper also assumes that $\ell_t \in B_2^d(1)$.
By imposing further conditions on $f_t$,
we can express a variety of regimes,
as are discussed below:

\paragraph{Stochastic regime}
In a \textit{stochastic regime},
it is assumed that $f_t$ follows an unknown distribution $\cD$ for all $t \in [T]$ independently.
This assumption implies that $\ell_t$ and $\xi_t$ do not change over all rounds,
i.e.,
there exists a \textit{true loss vector} $\ell^* \in \re^d$ and $\xi^*$ such that $\ell_t = \ell^*$ and $\xi_t = \xi^*$ hold for all $t \in [T]$.
Note here that standard stochastic settings also assume that functions of $\varepsilon_t$ represent \textit{zero-mean} noise,
i.e.,
$\xi^* = 0$.
This assumption is,
however,
not necessary in the algorithm proposed in this paper.
Moreover,
the proposed algorithm does not even require the assumption that $\varepsilon_t$ follows an identical distribution,
details of which will be discussed in Section~\ref{sec:result}.

\paragraph{Adversarial regime}
In the \textit{adversarial regime},
by way of contrast to the stochastic regime,
$( \ell_t  )_{t=1}^T$ is an arbitrary sequence.
More precisely,
$\ell_t$ can be chosen in an adversarial way depending on $(a_s)_{s=1}^{t-1}$.
Though adversarial environments considered in previous studies are often free from noise,
i.e.,
$\varepsilon_t(a) = 0$ is assumed,
most algorithms work well as long as the noise follows bounded zero-mean distributions.
The proposed algorithm in this paper does not require this assumption as well.

\paragraph{Stochastic regime with adversarial corruption}
The \textit{stochastic regime with adversarial corruption}
is an intermediate regime between stochastic and adversarial regimes.
It is parametrized by a true loss vector $\ell^* \in B_2^d(1)$ and by a \textit{corruption level} $C \geq 0$.
In this regime,
the sequence of $(\ell_t)_{t=1}^T$ is subject to the constraint that $\sum_{t=1}^T \| \ell_t - \ell^* \|_2 \leq C$.
This can be interpreted as a situation in which an adversary adds a corruption of $c_t = \ell_t - \ell^*$ to the loss function defined by $\ell^*$
and the magnitude of $c_t$ sums up to $C$ at most,
i.e.,
$\sum_{t=1}^T \| c_t \|_2 \leq C$.
If we set the condition level $C$ to zero,
this regime coincides with the stochastic regime.
On the other hand,
if $C = \Omega(T)$,
then the regime is adversarial
as there are no constraints on $\ell_t$ except for $\| \ell_t \|_2 \leq 1$.

\subsection{Follow the regularized leader}
In the proposed algorithm,
we use the framework of (optimistic) \textit{follow-the-regularized-leader (FTRL)} methods.
In this framework,
we choose a point $x_t$ in a closed convex set $\cX \subseteq \re^d$ by solving the following optimization problem:
\begin{align}
	\label{eq:defOFTRL}
	x_t \in \argmin_{x \in \cX}
	\left\{
		\linner m_t + \sum_{s=1}^{t-1} \hat{\ell}_s ,  x \rinner + \psi_{t} (x)
	\right\},
\end{align}
where $\hat{\ell}_s$ is the (estimated) loss vector,
$m_t \in \re^d$ is an \textit{optimistic prediction},
and $\psi_t(x)$ is a \textit{regularization term},
which is a differentiable convex function over $\cX$.
Note that the original FTRL framework here does not employ optimistic prediction,
i.e., the value of $m_t$ is fixed to $0$.
The technique of optimistic prediction $m_t$ has been introduced to further improve the performance of FTRL,
e.g.,
by \citet{rakhlin2013online}.

In the analysis of FTRL,
we use the \textit{Bregman divergence} $D_{\psi}$ associated with some differentiable convex function $\psi$ defined as follows:
\begin{align}
	D_{\psi} (x, y)
	=
	\psi(x) - \psi(y) - \linner \nabla \psi(y), x - y \rinner,
\end{align}
where $\nabla \psi (y)$ denotes the gradient of $\psi$ at $y$.
We can easily see that $D_{\psi}(x, y) \geq 0$ for any $x$ and $y$,
which follows from the convexity of $\psi$.
The following lemma provides an upper bound of the regret for FTRL:
\begin{lemma}
	\label{lem:OFTRL}
	We assume that $\psi_1(x) \geq 0$ and $\psi_{t+1}(x) \geq \psi_{t}(x)$ hold for all $x$ and $t$.
	If $x_t$ is given by \eqref{eq:defOFTRL},
	it holds for any $x^* \in \mathrm{int}(\cX)$ that
	\begin{align}
		\label{eq:lemOFTRL}
		\sum_{t=1}^T
		\linner
			\hat{\ell}_t,
			x_t - x^*
		\rinner
		\leq
		\sum_{t=1}^T
		\left(
			\linner
				\hat{\ell}_t - m_t,
				x_t - \tilde{x}_{t+1}
			\rinner
			- D_{\psi_t}( \tilde{x}_{t+1}, x_t )
		\right)
		+ \psi_{T+1}( x^* ),
	\end{align}
	where $\tilde{x}_t$ is defined by
		$
		\tilde{x}_t \in \argmin_{ x \in \cX }
		\left\{
			\linner
			\sum_{s=1}^{t-1} \hat{\ell}_s,
			x
			\rinner
			+ \psi_{t}( x )
		\right\}
		$.
\end{lemma}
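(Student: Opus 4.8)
The plan is to prove the bound by inserting the auxiliary iterate $\tilde x_{t+1}$ between $x_t$ and $x^*$ and splitting the per-round term into three pieces: a part that appears verbatim in the target, a \emph{stability} part controlled by the optimality of $x_t$, and a \emph{be-the-leader} part that telescopes. Throughout I write $\Phi_t(x) := \linner \sum_{s=1}^{t-1}\hat\ell_s, x\rinner + \psi_t(x)$, so that $\tilde x_t \in \argmin_{x\in\cX}\Phi_t(x)$ while $x_t \in \argmin_{x\in\cX}\{ \linner m_t, x\rinner + \Phi_t(x) \}$. The starting identity is $\linner \hat\ell_t, x_t - x^*\rinner = \linner \hat\ell_t - m_t, x_t - \tilde x_{t+1}\rinner + \linner m_t, x_t - \tilde x_{t+1}\rinner + \linner \hat\ell_t, \tilde x_{t+1} - x^*\rinner$, in which the first summand is exactly the optimistic-error term on the right-hand side of \eqref{eq:lemOFTRL}. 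It remains to control the last two summands, summed over $t$, by $-\sum_t D_{\psi_t}(\tilde x_{t+1}, x_t) + \psi_{T+1}(x^*)$.

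First I would handle the stability term $\linner m_t, x_t - \tilde x_{t+1}\rinner$ using only the first-order optimality of $x_t$. Since $x_t$ minimizes the convex differentiable map $x \mapsto \linner m_t, x\rinner + \Phi_t(x)$ over $\cX$, its variational inequality yields the ``strong'' optimality estimate $\linner m_t, u\rinner + \Phi_t(u) - ( \linner m_t, x_t\rinner + \Phi_t(x_t) ) \ge D_{\psi_t}(u, x_t)$ for every $u \in \cX$ (the linear part contributes nothing beyond the gradient term, and the regularizer contributes exactly the Bregman divergence). Taking $u = \tilde x_{t+1}$ and rearranging gives $\linner m_t, x_t - \tilde x_{t+1}\rinner + D_{\psi_t}(\tilde x_{t+1}, x_t) \le \Phi_t(\tilde x_{t+1}) - \Phi_t(x_t)$. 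Because $\tilde x_t$ minimizes $\Phi_t$, I may further replace $\Phi_t(x_t)$ by $\min_x \Phi_t(x) = \Phi_t(\tilde x_t)$, producing the cleaner residual $\Phi_t(\tilde x_{t+1}) - \Phi_t(\tilde x_t)$.

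Next I would combine this residual with the be-the-leader term $\linner \hat\ell_t, \tilde x_{t+1} - x^*\rinner$ rather than bounding the two separately. Using $\Phi_{t+1}(x) - \Phi_t(x) = \linner \hat\ell_t, x\rinner + (\psi_{t+1}(x) - \psi_t(x))$, one rewrites $\linner \hat\ell_t, \tilde x_{t+1}\rinner + \Phi_t(\tilde x_{t+1}) = \Phi_{t+1}(\tilde x_{t+1}) - (\psi_{t+1}(\tilde x_{t+1}) - \psi_t(\tilde x_{t+1}))$; invoking the monotonicity assumption $\psi_{t+1} \ge \psi_t$ to discard the nonnegative increment shows that the sum of the stability residual and the be-the-leader contribution is at most $\Phi_{t+1}(\tilde x_{t+1}) - \Phi_t(\tilde x_t) = \min_x \Phi_{t+1}(x) - \min_x \Phi_t(x)$, which telescopes over $t = 1, \dots, T$. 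The boundary terms are $\min_x \Phi_{T+1}(x) \le \Phi_{T+1}(x^*) = \linner \sum_{s=1}^T \hat\ell_s, x^*\rinner + \psi_{T+1}(x^*)$ and $-\min_x \Phi_1(x) = -\min_x \psi_1(x) \le 0$ by the assumption $\psi_1 \ge 0$. Subtracting $\linner \sum_s \hat\ell_s, x^*\rinner$ and regrouping the Bregman terms reproduces exactly the right-hand side of \eqref{eq:lemOFTRL}.

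The main obstacle is that the naive route — bounding $\sum_t \linner \hat\ell_t, \tilde x_{t+1} - x^*\rinner$ and $\sum_t \linner m_t, x_t - \tilde x_{t+1}\rinner$ independently — is too lossy, because the residual $\Phi_t(\tilde x_{t+1}) - \Phi_t(\tilde x_t)$ left by the stability step is nonnegative and does \emph{not} vanish on its own. The key idea is to feed this residual back into the telescoping sum, where each $\Phi_t(\tilde x_{t+1})$ pairs with $\linner \hat\ell_t, \tilde x_{t+1}\rinner$ to assemble $\Phi_{t+1}(\tilde x_{t+1})$, and the monotonicity of $\psi_t$ lets the leftover regularizer increments be dropped in the favorable direction. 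A secondary point to verify is that the variational inequality for $x_t$ is valid as used; this holds since $\psi_t$ is differentiable and convex on $\cX$, so the minimizer satisfies the first-order condition, and only $x^* \in \mathrm{int}(\cX)$ is needed to keep the comparator terms well defined.
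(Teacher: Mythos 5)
Your proof is correct, and it is essentially the standard optimistic-FTRL analysis (decomposition through $\tilde{x}_{t+1}$, the strong first-order optimality of $x_t$ yielding the Bregman term, and telescoping of $\min_x \Phi_t(x)$ using $\psi_{t+1} \geq \psi_t$ and $\psi_1 \geq 0$) that the paper itself invokes by reference, as it gives no in-text proof and instead points to Chapter 28 of \citet{lattimore2018bandit} and Lemma~1 of \citet{ito2022adversarially}.
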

This lemma can be shown via a standard analysis for FTRL,
e.g.,
as in Chapter 28 of \citet{lattimore2018bandit}.
We can also refer to,
e.g.,
the proof of Lemma~1 by \citet{ito2022adversarially}.

\subsection{Self-concordant barriers}
\label{sec:self-concordant}
In our proposed algorithm,
we use \textit{self-concordant barriers} to define regularization terms,
just as \citet{abernethy2008efficient} did.
Self-concordant barriers are defined as follows:
\begin{definition}
	\upshape
	A convex function $\psi : \mathrm{int} (\cX) \rightarrow \re$ of class $C^3$ is called a \textit{self-concordant function}
	if
    (i)
		$|D^3 \psi(x)[h, h, h]| \leq 2 (  D^2 \psi(x)[h, h] )^{3/2}$
	holds for any $x \in \mathrm{int}(\cA)$ and $h \in \re^d$, and
    (ii)
    $\psi(x_i)$ tends to infinity along every sequence $x_1 , x_2, \ldots \in \mathrm{int}(\cX) $ converging to a boundary point of $\mathrm{int}(\cX)$,
    where $D^k  \psi(x)[h_1, \ldots, h_k]$ denotes the value of the $k$-th differential of $\psi$ at $x$ along the directions $h_1, \ldots, h_k$.
	Let $\theta \geq 0$ be a non-negative real number.
	A self-concordant function $\psi: \mathrm{int}(\cX) \rightarrow \re$ is called a \textit{$\theta$-self-concordant barrier} for $\cX$ if $|D\psi(x)[h]| \leq \theta^{1/2}(D^2 \psi(x)[h,h])^{1/2}$ holds for any $x \in \mathrm{int}(\cX)$ and $h \in \re^d$.
\end{definition}

\begin{remark}
	\upshape
	For any convex set $\cX \subseteq \re^d$,
	there exists a $d$-self-concordant barrier for $\cX$ \citep{lee2021universal}.
	This barrier is,
	however,
	not always efficiently computable.
	On the other hand,
	for any $d$-dimensional polytope,
	we can compute an $\theta$-self-concordant barrier with $\theta = {O}(d)$ in polynomial time \citep{lee2014path,lee2019solving}.
\end{remark}

Given a self-concordant barrier $\psi: \mathrm{int}(\cX) \rightarrow \re$,
for any $x \in \mathrm{int}(\cX)$ and $h \in \re^d$,
we assume that $\nabla^2 \psi(x)$ has full rank.
Denote
\begin{align}
	\| h \|_{x, \psi} = \sqrt{ h^\top \nabla^2 \psi(x) h },
	\quad
	\| h \|_{x, \psi}^* = \sqrt{ h^\top (\nabla^2 \psi(x))^{-1} h }
\end{align}
and define the \textit{Dikin's ellipsoid} $W_{r}(x) \subseteq \re^d$ of $\psi$ centered at $x$ of the radius $r > 0$ as follows:
\begin{align}
	W_{r}(x) = \left\{ y \in \re^d \mid \| y - x \|_{x, \psi} \leq r \right\}.
\end{align}

The three lemmas below are used in the design and analysis of our proposed algorithm.
\begin{lemma}[Theorem 2.1.2 by \citet{nesterov1994interior}]
	\label{lem:ellipsoid}
	If $\psi$ is a self-concordant barrier for a closed convex set $\cX$,
	every Dikin's ellipsoid of $\psi$ of radius $1$ is contained in $\cX$,
	i.e.,
	$W_1(x) \subseteq \cX$ holds for any $x \in \mathrm{int}(\cX)$.
\end{lemma}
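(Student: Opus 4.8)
The plan is to prove the stronger statement that the \emph{open} Dikin ellipsoid of radius $1$ lies in $\mathrm{int}(\cX)$, from which the claim follows by closedness of $\cX$. Fix $x \in \mathrm{int}(\cX)$ and a point $y$ with $r := \|y - x\|_{x, \psi} < 1$, and write $h = y - x$. I would reduce the problem to a one-dimensional statement along the segment $[x, y]$ by introducing the restriction $g(t) = \psi(x + t h)$, defined for those $t \geq 0$ such that $x + t h \in \mathrm{int}(\cX)$. By the chain rule, $g'(t) = D\psi(x + th)[h]$, $g''(t) = D^2 \psi(x + th)[h, h] = \|h\|_{x+th, \psi}^2$, and $g'''(t) = D^3 \psi(x + th)[h, h, h]$; in particular $g''(0) = \|h\|_{x, \psi}^2 = r^2$. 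Since $\nabla^2 \psi$ has full rank, $g'' > 0$ wherever it is defined, so the strategy is to control how $g''$ evolves along the ray and thereby certify that the whole segment stays interior.

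The key step is a differential-inequality trick. Condition (i) in the definition of self-concordance, applied with the direction $h$, yields $|g'''(t)| \leq 2 (g''(t))^{3/2}$. Setting $u(t) = (g''(t))^{-1/2}$, I would differentiate to obtain $u'(t) = -\tfrac{1}{2} (g''(t))^{-3/2} g'''(t)$, whence $|u'(t)| \leq \tfrac{1}{2} (g''(t))^{-3/2} \cdot 2 (g''(t))^{3/2} = 1$. Integrating this bound gives $u(t) \geq u(0) - t = \tfrac{1}{r} - t$ on the interval where $g$ is defined. Because $r < 1$ we have $\tfrac{1}{r} > 1$, so $u(t) > 0$ — equivalently $g''(t)$ stays finite — for every $t \in [0, 1]$ for which $x + th$ remains interior.

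The main obstacle, and the only place the barrier property (ii) is used, is ruling out that the segment leaves $\mathrm{int}(\cX)$ before reaching $y$. I would argue by contradiction, introducing the exit time $t^* = \sup\{ t \geq 0 : x + sh \in \mathrm{int}(\cX) \text{ for all } s \in [0, t] \}$ and supposing $t^* \leq 1$. On $[0, t^*)$ the estimate above gives $g''(t) = u(t)^{-2} \leq (\tfrac{1}{r} - t)^{-2} \leq (\tfrac{1}{r} - 1)^{-2}$, a finite bound since $r < 1$. Integrating twice then shows that $g'$ and hence $g$ remain bounded as $t \uparrow t^*$; but $x + th$ converges to a boundary point of $\cX$ as $t \uparrow t^*$, so property (ii) forces $g(t) \to +\infty$, a contradiction. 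Therefore $t^* > 1$, so $y = x + h \in \mathrm{int}(\cX)$. This proves $\{ y : \|y - x\|_{x, \psi} < 1 \} \subseteq \mathrm{int}(\cX)$; since the Hessian norm is nondegenerate, $W_1(x)$ is the closure of this open ellipsoid, and taking closures together with the closedness of $\cX$ (so that $\overline{\mathrm{int}(\cX)} = \cX$) yields $W_1(x) \subseteq \cX$.
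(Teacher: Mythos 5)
Your proof is correct. The paper itself offers no proof of this lemma --- it imports the statement verbatim as Theorem 2.1.2 of \citet{nesterov1994interior} --- and your argument is essentially the classical proof from that source: restrict $\psi$ to the segment, use the self-concordance inequality to get $|u'(t)| \le 1$ for $u = (g'')^{-1/2}$, integrate to bound $g''$ uniformly on $[0, \min\{t^*, 1\})$ since $r < 1$, and invoke the barrier property (ii) to rule out an exit time $t^* \le 1$. The two auxiliary facts you lean on are both available in the paper's setup: the standing full-rank assumption on $\nabla^2 \psi$ makes the substitution $u = (g'')^{-1/2}$ legitimate (so $g'' > 0$ along the segment), and in the final step only the inclusion $\overline{\mathrm{int}(\cX)} \subseteq \cX$, which follows from closedness of $\cX$ alone, is actually needed to pass from the open ellipsoid to $W_1(x)$.
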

Let $\pi_{z, \cX} (x)$ denote the Minkowsky function of $\cX$ whose pole is at $z$:
\begin{align}
	\label{eq:defMinkowsky}
	\pi_{z, \cX} (x)
	=
	\inf \left\{ r > 0 \mid z + r^{-1}(x - z) \in \cX \right\}.
\end{align}
We have an upper bound on $\psi$ expressed with this Minkowsky function,
as follows:
\begin{lemma}[Propositoin 2.3.2 by \citet{nesterov1994interior}]
	\label{lem:boundpsi}
	If $\psi$ is a $\theta$-self-concordant barrier for $\cX$,
	it holds for any $x$ and $y$ in $\mathrm{int}(\cX)$ that
		$
		\psi(x) \leq \psi(y) + \theta \log \frac{1}{1 - \pi_{y,\cX}(x)}
		$.
\end{lemma}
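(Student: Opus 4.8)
The plan is to reduce this multivariate inequality to a one-dimensional statement by restricting $\psi$ to the ray emanating from $y$ through $x$, and then to control the growth of $\psi$ along that ray via the defining inequality of a $\theta$-self-concordant barrier. Write $h = x - y$ and let $\pi = \pi_{y,\cX}(x)$. Since $x \in \mathrm{int}(\cX)$ we have $\pi < 1$, so $\tau_{\max} := 1/\pi > 1$ is well defined, the point $w = y + \tau_{\max} h$ lies on $\partial \cX$, and $r(\tau) := y + \tau h$ remains in $\mathrm{int}(\cX)$ for every $\tau \in [0, \tau_{\max})$, being a proper convex combination of the interior point $y$ and the boundary point $w$. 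Setting $\phi(\tau) = \psi(r(\tau))$, which is of class $C^3$ on $[0, \tau_{\max})$ with $\phi(0) = \psi(y)$ and $\phi(1) = \psi(x)$, the goal reduces to bounding $\phi(1) - \phi(0)$.

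The crux is a differential inequality for $\phi$. The chain rule gives $\phi'(\tau) = D\psi(r(\tau))[h]$ and $\phi''(\tau) = D^2\psi(r(\tau))[h,h]$, so the barrier condition $|D\psi(x)[h]| \le \theta^{1/2}(D^2\psi(x)[h,h])^{1/2}$ translates into $(\phi'(\tau))^2 \le \theta\, \phi''(\tau)$, while convexity of $\psi$ gives $\phi''(\tau) \ge 0$, so $\phi'$ is non-decreasing. I would then establish
\[
\phi'(\tau) \le \frac{\theta}{\tau_{\max} - \tau}
\qquad\text{for all } \tau \in [0, \tau_{\max}).
\]
This is trivial wherever $\phi'(\tau) \le 0$. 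Otherwise $\phi'$ stays positive on $[\tau, \tau_{\max})$ by monotonicity, and for $g := 1/\phi'$ one computes $g' = -\phi''/(\phi')^2 \le -1/\theta$. Since $\phi$ is $C^3$ on $[0, \tau_{\max})$ the derivative $\phi'$ is finite there, hence $g$ stays strictly positive on $[\tau, \tau_{\max})$; integrating $g' \le -1/\theta$ from $\tau$ to any $\tau' < \tau_{\max}$ and using $g(\tau') > 0$ yields $g(\tau) \ge (\tau' - \tau)/\theta$, and letting $\tau' \to \tau_{\max}$ gives $g(\tau) \ge (\tau_{\max} - \tau)/\theta$, which is exactly the claimed bound.

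Integrating this bound over $[0,1]$ then finishes the proof:
\[
\psi(x) - \psi(y) = \int_0^1 \phi'(\tau)\, d\tau \le \int_0^1 \frac{\theta}{\tau_{\max} - \tau}\, d\tau = \theta \log \frac{\tau_{\max}}{\tau_{\max} - 1} = \theta \log \frac{1}{1 - \pi},
\]
where the final equality substitutes $\tau_{\max} = 1/\pi$. I expect the main obstacle to be the middle paragraph: justifying the differential-inequality argument rigorously, and in particular ruling out that $g = 1/\phi'$ reaches zero (equivalently, that $\phi'$ blows up) at some interior parameter before $\tau_{\max}$, since that is precisely what licenses passing to the limit $\tau' \to \tau_{\max}$. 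By comparison, the geometric setup identifying $\tau_{\max}$ with the Minkowsky function and the closing integration are routine.
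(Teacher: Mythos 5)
The paper never proves this lemma at all: it is imported verbatim as Proposition 2.3.2 of \citet{nesterov1994interior}, so there is no in-paper argument to compare against. Your proof is correct, and it is in substance the classical argument behind that cited proposition: restrict $\psi$ to the ray from $y$ through $x$, note that the barrier condition becomes the one-dimensional differential inequality $(\phi'(\tau))^2 \le \theta\,\phi''(\tau)$ while convexity makes $\phi'$ non-decreasing, pass to $g = 1/\phi'$ on any interval where $\phi' > 0$ to get $g' \le -1/\theta$ and hence $\phi'(\tau) \le \theta/(\tau_{\max}-\tau)$, and integrate over $[0,1]$. The step you flagged as the delicate one is handled exactly as you suggest: for $\tau < \tau_{\max}$ the point $r(\tau)$ is a proper convex combination of the interior point $y$ with a point of $\cX$, hence lies in $\mathrm{int}(\cX)$, where $\psi$ is $C^3$; thus $\phi'$ is finite and $g$ strictly positive on all of $[\tau, \tau_{\max})$, which is what licenses letting $\tau' \to \tau_{\max}$. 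The only loose ends are trivial edge cases: $x = y$ (nothing to prove), and $\pi_{y,\cX}(x) = 0$, in which case $w$ is not a boundary point as you assert; the latter cannot occur here for $x \neq y$ since $\cX = \conv(\cA) \subseteq B_2^d(1)$ is bounded, and even in general your bound degenerates gracefully to $\phi' \le 0$, which still gives the claim.
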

If we use a self-concordant barrier $\psi$,
we can use the following lemma to bound the \textit{stability term}
$
	\left(
		\linner
			\hat{\ell}_t - m_t,
			x_t - x'_{t+1}
		\rinner
		- D_{\psi_t}( x'_{t+1}, x_t )
	\right)
$ in Lemma~\ref{lem:OFTRL}.

\begin{lemma}
	\label{lem:boundstability}
	Let $\psi$ be a self-concordant function on $\mathcal{X}$ and $x, y \in \mathrm{int}(\mathcal{X})$.
	Let $\beta > 0$ and $\ell \in \mathbb{R}^d$.
	Suppose that $\|\ell\|_{x, \psi}^* \le \beta/3$.
	We then have
		$
		\langle \ell, x-y \rangle - \beta D_{\psi}(y, x) \le \frac{2}{\beta}\|\ell\|_{x, \psi}^{*2}.
		$
\end{lemma}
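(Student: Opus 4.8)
The plan is to reduce the statement to a one-dimensional optimization along the segment joining $x$ and $y$. Write $g = \|\ell\|_{x,\psi}^*$ and $r = \|y-x\|_{x,\psi}$, so the hypothesis reads $g \le \beta/3$. First I would control the linear term by the generalized Cauchy--Schwarz inequality in the local norm induced by $\nabla^2\psi(x)$, which gives $\linner \ell, x-y \rinner \le \|\ell\|_{x,\psi}^* \, \|x-y\|_{x,\psi} = g r$. The more delicate ingredient is a matching lower bound on the Bregman divergence; I claim $D_{\psi}(y,x) \ge \omega(r)$, where $\omega(t) = t - \log(1+t)$.

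To establish this, I would restrict $\psi$ to the segment, setting $\phi(s) = \psi(x + s(y-x))$ for $s \in [0,1]$. Then $\phi$ is a one-dimensional self-concordant function with $\phi''(0) = \|y-x\|_{x,\psi}^2 = r^2$, and the self-concordance inequality $|\phi'''| \le 2(\phi'')^{3/2}$ is equivalent to $\bigl|\frac{d}{ds}(\phi''(s))^{-1/2}\bigr| \le 1$. Integrating this differential inequality from $0$ yields $(\phi''(s))^{-1/2} \le (\phi''(0))^{-1/2} + s$, hence $\phi''(s) \ge r^2/(1+sr)^2$ for all $s \ge 0$. Since $D_{\psi}(y,x) = \phi(1) - \phi(0) - \phi'(0) = \int_0^1 (1-s)\,\phi''(s)\,ds$, substituting this lower bound and evaluating the elementary integral gives $\int_0^1 (1-s)\,r^2/(1+sr)^2\,ds = r - \log(1+r) = \omega(r)$, as claimed. (Alternatively, this is the standard lower bound for self-concordant functions and could simply be cited.)

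Combining the two estimates reduces the lemma to the scalar inequality $\sup_{r \ge 0}\{\, g r - \beta\,\omega(r)\,\} \le 2g^2/\beta$. I would solve the maximization explicitly: differentiating $g r - \beta(r - \log(1+r))$ and setting the derivative to zero gives $r^* = g/(\beta - g)$, which is admissible because $g \le \beta/3 < \beta$. Plugging $r^*$ back in and simplifying with $1 + r^* = \beta/(\beta-g)$ yields the closed form $g r^* - \beta\,\omega(r^*) = -g - \beta\log(1 - g/\beta)$.

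It then remains to verify $-g - \beta\log(1 - g/\beta) \le 2g^2/\beta$ under $g \le \beta/3$, and this is precisely where the constant $3$ in the hypothesis is used. Writing $u = g/\beta \le 1/3$ and expanding $-\log(1-u) = \sum_{k \ge 1} u^k/k$, the left-hand side becomes $\beta\sum_{k \ge 2} u^k/k$ (the $k=1$ term cancels $-g$), so the target reduces to $\sum_{k \ge 2} u^{k-2}/k \le 2$. I would bound this tail by $\frac12 + \frac13\sum_{j \ge 1} u^j = \frac12 + \frac{u}{3(1-u)}$, which for $u \le 1/3$ is at most $\frac12 + \frac16 = \frac23 \le 2$. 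The main obstacle is obtaining the lower bound $D_{\psi}(y,x) \ge \omega(r)$ cleanly from self-concordance; once that is in hand, the remaining work is a routine one-variable optimization together with a convergent-series estimate, and the threshold $\beta/3$ is exactly what is needed to absorb the higher-order terms into the quadratic bound.
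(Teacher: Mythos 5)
Your proof is correct, but it takes a genuinely different route from the paper's. The paper argues via the \emph{minimizer}: it sets $f(y) = D_{\psi}(y,x) - \langle \ell, x-y\rangle/\beta$, observes that the Newton decrement of $f$ at $x$ equals $\|\ell\|_{x,\psi}^*/\beta \le 1/3$, invokes the Newton-decrement bound (Lemma~\ref{lem:nd_property}) to conclude the minimizer $y^*$ satisfies $\|x-y^*\|_{x,\psi} \le 1/2$, and then only needs a \emph{local} quadratic lower bound $D_{\psi}(y,x) \ge \tfrac18 \|x-y\|_{x,\psi}^2$, obtained from Hessian stability (Lemma~\ref{lem:sc_hessian}), in that small ball. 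You instead work pointwise in $y$: Cauchy--Schwarz on the linear term, the \emph{global} self-concordance lower bound $D_{\psi}(y,x) \ge \omega(r) = r - \log(1+r)$ (which you derive correctly by one-dimensional restriction and integration of $\bigl|\tfrac{d}{ds}(\phi'')^{-1/2}\bigr| \le 1$, and which is standard and citable, e.g., from Nesterov), and then an exact computation of $\sup_{r\ge 0}\{gr - \beta\omega(r)\} = -g - \beta\log(1-g/\beta)$ followed by a series estimate; your final bound of $\tfrac12 + \tfrac{u}{3(1-u)} \le \tfrac23$ in fact proves the stronger constant $\tfrac23 g^2/\beta$ in place of $2g^2/\beta$. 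What each approach buys: yours is more self-contained and elementary (no appeal to existence of an interior minimizer of $f$ or to Newton-decrement theory) and yields a sharper constant, while the paper's is more modular, reusing two standard lemmas it states anyway and avoiding the explicit conjugate/series computation. The only point to make explicit in your write-up is that integrating the differential inequality requires $\phi''(s) > 0$ along the segment; this holds because the paper assumes $\nabla^2\psi$ has full rank on $\mathrm{int}(\mathcal{X})$ and the segment joining $x$ and $y$ lies in $\mathrm{int}(\mathcal{X})$.
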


\section{Algorithm}
Let $\cX$ be the convex hull of $\cA$
and $\psi$ be a $\theta$-self-concordant barrier for $\cX$.
In the proposed algorithm,
we compute $x_t$ by solving the optimization problem \eqref{eq:defOFTRL}
with $\psi_t(x) = \beta_{t} \psi(x)$,
where $\beta_t$ is a \textit{learning rate parameter} satisfying
$6d \leq \beta_1 \leq \beta_2 \leq \cdots$.
The manner of computing $a_t$, $\hat{\ell}_t$, $m_t$, and $\beta_t$ will be presented below.

\paragraph{Action $a_t$ and unbiased estimator $\hat{\ell}_t$ for loss vector}
After computing $x_t$,
we choose the action $a_t \in \cA$ so that
$\E [a_t | x_t] = x_t$.
Let $\{ e_1, \ldots, e_d  \}$ and $\{ \lambda_1, \ldots, \lambda_d \}$ be the set of eigenvectors and eigenvalues of $\nabla^2 \psi(x_t)$.
Define $\cE_t := \{ x_t + \lambda_i^{-1/2} e_i \mid i \in [d] \} \cup \{ x_t - \lambda_i^{-1/2} e_i \mid i \in [d] \} $.
Note that here $\cE_t \subseteq \cX$ holds since $\cE_t \subseteq W_1(x_t)$ follows from the definition of $\cE_t$
and since $W_1(x) \subseteq \cX$ follows from Lemma~\ref{lem:ellipsoid}.
In the algorithm by \citet{abernethy2008efficient},
the action $a_t$ is chosen from $\cE_t$ uniformly at random.
Unlike this existing method,
our proposed algorithm chooses an action from a set $\cE'_t$ scaled up from $\cE_t$ with a reference point $z_t \in \cA$,
or chooses $a_t = z_t$ with some probability.
More precisely,
after computing $\cE_t$ and choosing a point $z_t \in \cA$,
we set $\cE'_t$ by
\begin{align}
	\cE'_t =  \left\{ z_t + \alpha_t (x - z_t) \mid x \in \cE_t  \right\},
\end{align}
where $\alpha_t \geq 1$ is defined as the largest real number such that $\cE'_t$ is included in $\cX$.
How to choose $z_t$ is discussed in the next pragraph.
If we denote $r_t = \alpha_t^{-1} \in (0, 1]$,
we can express $r_t$ as follows:
\begin{align}
	\label{eq:defrt}
	r_t = \inf \left\{ r > 0 \mid z_t + r_t^{-1} ( x - z_t ) \in \cX \quad (x \in \cE_t) \right\}
	= \max_{x \in \cE_t} \pi_{z_t, \cX}(x) ,
\end{align}
where $\pi$ is the Minkowsky function defined by \eqref{eq:defMinkowsky}.
We choose $z_t \in \cA$ so that the value of $r_t$ is as small as possible.
Let $x'_t$ denote the center of $\cE'_t$,
i.e.,
define $x'_t = z_t + r_t^{-1} (x_t - z_t)$.
We then set $b_t = 1$ with probability $r_t$
and $b_t = 0$ with probability $1 - r_t$.
If $b_t=0$,
we choose $a'_t = z_t$.
If $b_t=1$,
we choose $a'_t$ from $\cE'_t$ uniformly at random.
In other words,
we pick
$i_t$ uniformly at random from $[d]$ and $\epsilon_t = \pm 1$ with probability $1/2$,
and set
$a'_t = z_t + r_t^{-1} ( x_t + \epsilon_t \lambda_{i_t}^{-1/2} e_{i_t} - z_t) $.
We then output $a_t \in \cA$ so that its expectation coincides with $a'_t \in \cX = \conv (\cA)$.
After obtaining feedback of $f_t(a_t)$,
we define
$\hat{\ell}_t$ by
\begin{align}
	\hat{\ell}_t
	=
	m_t +
	d b_t \epsilon_t \lambda_{i_t}^{1/2}
	(
		f_t(a_t) - \linner m_t, a_t \rinner
	)
	e_{i_t} .
	\label{eq:defellhat}
\end{align}
We can show that
the conditional expectation of $a_t$ is equal to $x_t$ and that $\hat{\ell}_t$
is an unbiased estimator of $\ell_t$,
i.e.,
we have
$
\E\left[a_t|x_t\right] = x_t
$
and
$
\E[\hat{\ell}_t|x_t] = \ell_t$,
proofs of which are given in Section~\ref{sec:unbiased} in the appendix.
We note that,
thanks to the scaled-up sampling,
the mean square of $\hat{\ell}_t - m_t$ is improved by a factor of $1/\alpha_t$,
which plays a central role in our proof of BOTW regret bounds.

\paragraph{Reference point $z_t$}
We will see that the smaller value of $r_t$ is,
the smaller variance of $\hat{\ell}_t - m_t$ is,
resulting in an improvement in regret.
To take maximum advantage of this effect,
we choose $z_t$ so that $r_t$ is as small as possible.
More precisely,
for a constant $\kappa \geq 1$,
we assume that $z_t$ satisfies
\begin{align}
	\label{eq:aspzt}
	r_t
	=
	\max_{x \in \cE_t} \pi_{z_t, \cX}(x)
	\leq
	\kappa
	\cdot
	\min_{z \in \cA}
	\max_{x \in \cE_t} \pi_{z, \cX}(x)
\end{align}
for all $t \in [T]$.
This assumption is used in our proof of $O(\log T)$-regret in stochastic environments.

\paragraph{Learning rate parameter $\beta_t$}
In the regret analysis in Section~\ref{sec:analysis},
we will show that
the regret for the proposed algorithm is bounded as
$R_T = O\left( \E \left[ d^2 \sum_{t=1}^T \frac{g_{t}(m_t)}{\beta_t} +  \beta_{T+1} \theta \log T \right]
\right)$,
where $g_t(m)$ is defined as
\begin{align}
	\label{eq:defgt}
	g_t(m) = b_t \cdot \left( \linner a_t, m \rinner - f_t(a_t) \right)^2 .
\end{align}
Intuitively,
$g_t(m_t) / \beta_t$ comes from the part of $\linner \hat{\ell}_t - m_t, x_t - \tilde{x}_{t+1} \rinner - D_{\psi_t} (\tilde{x}_{t+1}, x_t)$ in \eqref{eq:lemOFTRL},
which is called \textit{stability terms},
and $\beta_{T+1} \log T$ comes from the part of $\psi_{T+1} (x^*)$,
called \textit{penalty terms}.
To balance stability and penalty terms,
we set $\beta_t$ by
\begin{align}
	\label{eq:defbetat}
	\beta_t =
	6 d
	+
	2 d \sqrt{ \frac{\sum_{s=1}^{t-1} g_s(m_s)  }{\theta \log T} },
\end{align}
which leads to
$R_T = O\left( d \E \left[ \sqrt{ \theta \log T \cdot \sum_{t=1}^T  g_t(m_t)  } \right] +  d \log T \right)$.

\paragraph{Optimistic prediction $m_t$}
To minimize the part of $\sum_{t=1}^T g_t(m_t)$,
we choose $m_t$ by using online projected gradient descent for $g_t$.
We set $m_1 = 0$ and update $m_t$ as follows:
\begin{align}
	\label{eq:updatemt}
	m'_{t+1} = m_{t} - \eta b_t \cdot ( \linner a_t, m_t \rinner - f_{t} (a_t) ) a_t,
	\quad
	m_{t+1} = \min \left\{ 1, \frac{1}{ \| m'_{t+1} \|_2 } \right\}  m'_{t+1} ,
\end{align}
where $\eta \in (0, 1/4)$ is the learning rate parameter for updating $m_t$.

The proposed algorithm can be summarized as Algorithm~\ref{alg:FTRL} in Section~\ref{sec:pseudocode} in the appendix.

\paragraph{Computational complexity}
The procedure in each round can be performed in polynomial time in $d$,
except for the computation of $z_t$.
Indeed,
given a self-concordant barrier for $\cX$,
we can solve an arbitrary linear optimization problem over $\cX$ (and thus also over $\cA$),
with the aid of,
e.g.,
interior point methods \citep{nesterov1994interior}.
This implies that convex optimization problems \eqref{eq:defOFTRL} can be solved in polynomial time as well.
Futher,
for any $a'_t \in \cX$,
we can find an expression of convex combination of points in $\cA$ in polynomial time
\citep[Corollary 11.4]{mirrokni2017tight,schrijver1998theory},
which means that we can randomly choose $a_t \in \cA$ so that $\E[a_t | a'_t] = a'_t$.
As for the calculation of $z_t$ satisfying \eqref{eq:aspzt},
it is not clear if there is a computationally efficient way at this point.
Because we can compute the value of $\max_{x \in \cE_t} \pi_{z, \cX}(x)$ for any $z \in \cA$
in polynomial time in $d$,
we can find $z_t$ minimizing this value in $O(\mathrm{poly}(d) |\cA|)$ time,
which can be exponential in $d$.

\section{Analysis}
\label{sec:analysis}

\subsection{Regret bounds for the proposed algorithm}
\label{sec:result}
\begin{theorem}[Regret bounds in the adversarial regime]
	\label{thm:adv}
	Let $L^*$, $Q$ and $P$ be parameters defined as in Table~\ref{table:parameters}.
	The regret for Algorithm~\ref{alg:FTRL} is bounded as
	\begin{align}
		\label{eq:adv1}
		R_T =
		O \left(
			d \sqrt{ \theta \log T 
			\left(
				\min \left\{ Q, P \right\}
				+
				\E \left[ \sum_{t=1}^T (\varepsilon_t( a_t ))^2 \right]
			\right)
			}
			+
			d \theta \log T
		\right).
	\end{align}
	Further,
	if $f_t(a) \geq 0$ for any $a \in \cA$ and $t \in [T]$,
	we have
	\begin{align}
		\label{eq:adv2}
		R_T =
		O \left(
			d \sqrt{ \theta L^* \log T   }
			+ d \theta \log T
		\right) .
	\end{align}
\end{theorem}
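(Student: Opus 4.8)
The plan is to combine the FTRL regret decomposition of Lemma~\ref{lem:OFTRL} with a self-bounding analysis of the optimistic prediction, so that the entire regret is routed through the single scalar quantity $\sum_{t=1}^T g_t(m_t)$ with $g_t$ as in \eqref{eq:defgt}. First I would reduce the regret against a comparator $a^*$ to a statement about the FTRL iterates $x_t$. Since the conditional mean of the noise is the action-independent constant $\xi_t$, the noise contributions of $a_t$ and of the fixed $a^*$ cancel in expectation, giving $R_T(a^*) = \E[\sum_t \linner \ell_t, x_t - a^* \rinner]$. Replacing $a^*$ by an interior point $x^* = (1-1/T)a^* + (1/T)x_0$ (with $x_0$ a fixed interior reference) costs only $O(1)$ and, by Lemma~\ref{lem:boundpsi}, bounds the penalty $\psi_{T+1}(x^*)=\beta_{T+1}\psi(x^*)=O(\beta_{T+1}\theta\log T)$; unbiasedness $\E[\hat{\ell}_t\mid x_t]=\ell_t$ then lets me pass to $\E[\sum_t \linner \hat{\ell}_t, x_t - x^* \rinner]$ and invoke Lemma~\ref{lem:OFTRL}. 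The stability terms are controlled by Lemma~\ref{lem:boundstability}, whose key input is the computation that, because $\hat{\ell}_t - m_t$ in \eqref{eq:defellhat} is proportional to an eigenvector $e_{i_t}$ of $\nabla^2\psi(x_t)$, the local dual norm collapses to $\|\hat{\ell}_t - m_t\|_{x_t,\psi}^{*2} = d^2 g_t(m_t)$; the hypothesis $\|\hat{\ell}_t - m_t\|_{x_t,\psi}^{*}\le 2d \le \beta_t/3$ holds since $\beta_t\ge 6d$. Inserting the learning rate \eqref{eq:defbetat} into the resulting $O(\E[d^2\sum_t g_t(m_t)/\beta_t + \beta_{T+1}\theta\log T])$ estimate and using the standard $\sum_t a_t/\sqrt{\sum_{s\le t}a_s}=O(\sqrt{\sum_t a_t})$ bound yields the master inequality $R_T = O(d\sqrt{\theta\log T\cdot\E[\sum_t g_t(m_t)]} + d\theta\log T)$, where Jensen moves the expectation inside the root.

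The remaining work is to bound $\E[\sum_t g_t(m_t)]$, which is where the optimistic prediction pays off, as $m_t$ is produced by online gradient descent \eqref{eq:updatemt} on the convex quadratics $g_t(m)=b_t(\linner a_t, m-\ell_t\rinner - \varepsilon_t(a_t))^2$. The update direction is half of $\nabla g_t(m_t)$ and enjoys the self-bounding gradient property $\|\nabla g_t(m_t)\|_2^2\le 4g_t(m_t)$ (because $\|a_t\|_2\le 1$). A one-step expansion of $\|m_{t+1}-u\|_2^2$ together with convexity gives, after telescoping, $\eta(1-\eta)\sum_t g_t(m_t)\le \|m_1-u_1\|_2^2 + 2D\sum_t\|u_t-u_{t+1}\|_2 + \eta\sum_t g_t(u_t)$ for any comparator sequence $(u_t)$, with $D$ the diameter. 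Taking the fixed comparator $u_t\equiv\bar{\ell}$ (the minimizer defining $Q$) and using Cauchy--Schwarz with $\|a_t\|_2\le 1$ gives $g_t(\bar{\ell})\le 2\|\ell_t-\bar{\ell}\|_2^2 + 2\varepsilon_t(a_t)^2$, hence $\E[\sum_t g_t(m_t)] = O(Q + \E[\sum_t\varepsilon_t(a_t)^2] + 1)$; taking instead the tracking comparator $u_t=\ell_t$, for which $g_t(\ell_t)=b_t\varepsilon_t(a_t)^2$ and $\sum_t\|u_t-u_{t+1}\|_2$ has expectation $P$, gives $\E[\sum_t g_t(m_t)] = O(P + \E[\sum_t\varepsilon_t(a_t)^2] + 1)$ for constant $\eta$. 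Both estimates hold for the same run, so I may take the minimum and substitute into the master bound to obtain \eqref{eq:adv1}.

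For the first-order bound \eqref{eq:adv2} I would instead compare against the fixed comparator $m^*=0$. When $f_t(a)\ge 0$ we have $f_t(a_t)\in[0,1]$, so $g_t(0)=b_t f_t(a_t)^2 \le f_t(a_t)$, and folding the algorithm's expected loss back against the best action gives $\E[\sum_t g_t(m_t)] = O(1 + R_T + L^*)$. Feeding this into the master bound produces the self-referential inequality $R_T = O(d\sqrt{\theta\log T\,(R_T + L^* + 1)} + d\theta\log T)$, which I would solve as a quadratic in $\sqrt{R_T}$ to conclude \eqref{eq:adv2}.

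I expect the main obstacle to be the unified control of $\sum_t g_t(m_t)$: the $Q$-bound is a routine static regret, but the $P$-bound requires the dynamic-regret telescoping in which the comparator moves by $\|\ell_t-\ell_{t+1}\|_2$ at each step, and the first-order bound requires closing the self-bounding loop and checking carefully that the induced quadratic inequality does not inflate the lower-order term beyond what is stated. A secondary subtlety is verifying that scaled-up sampling leaves the adversarial analysis intact; this works precisely because $a_t\in\cA\subseteq B_2^d(1)$ keeps $\|a_t\|_2\le 1$ regardless of the scaling factor $\alpha_t$, so the contraction used to bound $g_t(u)$ is unaffected and the factor $r_t$ enters only through $b_t$ in $g_t$.
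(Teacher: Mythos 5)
Your proposal is correct and takes essentially the same route as the paper: your first paragraph reconstructs the paper's Lemma~\ref{lem:RTgt} (the master bound via Lemmas~\ref{lem:OFTRL}, \ref{lem:boundpsi}, \ref{lem:boundstability}, the dual-norm identity $\|\hat{\ell}_t - m_t\|_{x_t,\psi}^{*2} = d^2 g_t(m_t)$, and the adaptive learning rate), and your comparator substitutions $u_t \equiv \bar{\ell}$, $u_t = \ell_t$, and $u_t = 0$ followed by the self-bounding quadratic closure are exactly the paper's proof of Theorem~\ref{thm:adv}. The only cosmetic difference is that you re-derive the tracking inequality from the projected-gradient update directly, whereas the paper cites it as Lemma~\ref{lem:TLE} (a special case of a known result on tracking linear experts); both yield the same $O(\sum_t g_t(u_t) + \sum_t \|u_{t+1}-u_t\|_2 + 1)$ control.
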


Note that the regret bounds in Theorem~\ref{thm:adv} are valid regardless of the choice of $z_t$.
In fact,
we can demonstrate these regret bounds even if we sample $a_t$ from $\cE_t$,
as is similarly done with the algorithm by \citet{abernethy2008efficient},
which corresponds to $r_t = 1$.
By way of contrast,
to show $O(\log T)$-regret bounds for stochastic environments,
we need the assumption of \eqref{eq:aspzt}.
Under this assumption,
we have the following regret bounds:
\begin{theorem}[Regret bounds in the corrupted stochastic regime]
	\label{thm:sto}
	Let $\ell^* \in \re^d$ and denote $C = \sum_{t=1}^T \| \ell_t - \ell^* \|_2$.
	Define $a^* \in \argmin_{a \in \cA} \linner \ell^*, a \rinner$ and
	$\Delta_{\min} = \min_{ a \in \cA \setminus \{ a^* \} } \linner \ell^*, a - a^* \rinner$.
	We have
	$
	R_T
	=
	O\left(
		d
		\sqrt{  \left( C +  \sum_{t=1}^T \sigma_{t}^2 \right) \theta \log T }
		+
		d \theta \log T
	\right)$,
	where we define
	$\sigma_{t}^2 = \max_{a \in \cA} \E [ (\varepsilon_t(a))^2 ]$.
	Further,
	if $a^* \in \argmin_{a \in \cA} \linner \ell^*, a \rinner$ exists uniquely,
	under the assumption of \eqref{eq:aspzt},
	we have
	\begin{align}
		\label{eq:thmsto}
		R_T (a^*)
		=
		O\left(
			\left( \frac{\kappa d \sigma^2}{\Delta_{\min}} + 1 \right) d\theta \log T
			+
			\sqrt{
				\left( \frac{ \kappa \sigma^2}{\Delta_{\min}}  + 1 \right) C d^2 \theta \log T
			}
		\right) ,
	\end{align}
	where $\sigma^2  = \max_{t \in [T]} \sigma_{t}^2$.
\end{theorem}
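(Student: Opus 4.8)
My plan is to run the self-bounding technique on top of a master regret bound expressed through $\E[\sum_{t=1}^T g_t(m_t)]$, where $g_t$ is the stability proxy from \eqref{eq:defgt}, and then bound this single quantity in two different ways corresponding to the two claimed inequalities. First I would derive the master bound. Applying Lemma~\ref{lem:OFTRL} with $\psi_t = \beta_t \psi$ and a comparator $x^\ast$ obtained by shrinking $a^\ast$ slightly toward an interior point (so that $x^\ast \in \mathrm{int}(\cX)$ and $\psi(x^\ast) = O(\theta \log T)$ by Lemma~\ref{lem:boundpsi}, at an additive cost of $O(1)$ in regret), and taking expectations using $\E[a_t \mid x_t] = x_t$ and $\E[\hat{\ell}_t \mid x_t] = \ell_t$ together with the cancellation of the common conditional mean $\xi_t$, the left-hand side becomes $R_T(a^\ast) = \E[\sum_t \linner \ell_t, a_t - a^\ast \rinner]$. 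From the definition \eqref{eq:defellhat} one checks $\|\hat{\ell}_t - m_t\|_{x_t,\psi}^{\ast 2} = d^2 g_t(m_t)$ and $\|\hat{\ell}_t - m_t\|_{x_t,\psi}^{\ast} \le 2d \le \beta_t/3$, so each stability term is at most $\frac{2 d^2 g_t(m_t)}{\beta_t}$ by Lemma~\ref{lem:boundstability}. Substituting the learning rate \eqref{eq:defbetat}, summing, and using Jensen's inequality to move the expectation inside the root gives the master bound $R_T(a^\ast) = O(d \sqrt{\theta \log T \cdot \E[\sum_t g_t(m_t)]} + d \theta \log T)$.

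Next I would control $\sum_t g_t(m_t)$ by the optimistic-prediction update. The map $m \mapsto g_t(m)$ is a convex quadratic enjoying the self-bounding gradient property $\|\nabla g_t(m)\|_2^2 \le 4 g_t(m)$, since $\|a_t\|_2 \le 1$ and $b_t^2 = b_t$. Because \eqref{eq:updatemt} is projected online gradient descent onto $B_2^d(1)$ and $\ell^\ast \in B_2^d(1)$, the standard descent inequality combined with this self-bounding property (for $\eta \in (0,1/4)$) yields $\sum_t g_t(m_t) = O(1 + \sum_t g_t(\ell^\ast))$. For the comparator $\ell^\ast$ I use $g_t(\ell^\ast) = b_t(\linner a_t, \ell^\ast - \ell_t \rinner - \varepsilon_t(a_t))^2 \le 2 b_t(\|\ell_t - \ell^\ast\|_2^2 + \varepsilon_t(a_t)^2)$, again by $\|a_t\|_2 \le 1$. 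Taking expectations with $\E[b_t \mid x_t] = r_t \le 1$, $\E[\varepsilon_t(a_t)^2] \le \sigma_t^2$, $\|\ell_t-\ell^\ast\|_2^2 \le 2\|\ell_t-\ell^\ast\|_2$, and $\sum_t \|\ell_t - \ell^\ast\|_2 \le C$ gives $\E[\sum_t g_t(m_t)] = O(C + \sum_t \sigma_t^2)$; plugging this into the master bound proves the first inequality of the theorem.

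For the sharper bound I would retain the factor $r_t = \alpha_t^{-1}$ rather than bounding it by $1$. The crucial input is the geometric claim that, under \eqref{eq:aspzt} and when $a^\ast$ is the unique minimizer, $r_t = O(\kappa\, \Delta(x_t)/\Delta_{\min})$ with $\Delta(x_t) = \linner \ell^\ast, x_t - a^\ast \rinner$. I would establish it by taking $a^\ast$ as a candidate reference point in \eqref{eq:defrt}: for $x \in \cE_t$ the scaled point $a^\ast + r^{-1}(x - a^\ast)$ stays in $\cX$ until $r$ is comparable to the displacement of $x_t$ from the optimal vertex measured against $\ell^\ast$, while $\Delta_{\min}$ lower-bounds the rate at which $\linner \ell^\ast, \cdot \rinner$ increases away from $a^\ast$; controlling the Dikin ellipsoid $\cE_t$ near the boundary via Lemma~\ref{lem:boundpsi} then yields the estimate up to the factor $\kappa$. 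Granting this, $\E[g_t(\ell^\ast)] = O(r_t(\|\ell_t - \ell^\ast\|_2^2 + \sigma^2))$ gives $\E[\sum_t g_t(m_t)] = O(C + \frac{\kappa \sigma^2}{\Delta_{\min}} \E[\sum_t \Delta(x_t)])$, and since $R_T(a^\ast) = \E[\sum_t \Delta(x_t)] + \E[\sum_t \linner \ell_t - \ell^\ast, a_t - a^\ast \rinner]$ with the last term bounded by $2C$, I obtain $\E[\sum_t \Delta(x_t)] \le R_T(a^\ast) + 2C$.

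Finally I would close the self-bounding loop. Writing $R = R_T(a^\ast)$ and $V = \kappa \sigma^2 / \Delta_{\min}$, the master bound becomes $R = O(d \sqrt{\theta \log T\,(1 + (1+V)C + V R)} + d \theta \log T)$. Splitting the root via $\sqrt{x+y} \le \sqrt{x} + \sqrt{y}$ isolates a self-referential term $O(d\sqrt{V \theta \log T}\cdot \sqrt{R})$; resolving the resulting inequality $R \le p\sqrt{R} + q$ as $R = O(p^2 + q)$ with $p^2 = O(V d^2 \theta \log T)$ and $q = O(d\sqrt{(1+V)C\,\theta \log T} + d\theta \log T)$ reproduces exactly \eqref{eq:thmsto}. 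I expect the main obstacle to be the geometric lemma $r_t = O(\kappa \Delta(x_t)/\Delta_{\min})$: it is the only step where the self-concordant geometry, the reference-point rule \eqref{eq:aspzt}, and the uniqueness of $a^\ast$ must be combined, and making the comparison between the Minkowsky gauge from $a^\ast$ and the linear suboptimality $\Delta(x_t)$ rigorous is substantially more delicate than the routine online-gradient-descent and self-bounding algebra surrounding it.
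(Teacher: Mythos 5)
Your overall architecture coincides with the paper's: a master bound $R_T = O\bigl(d\sqrt{\theta\log T\cdot\E[\sum_t g_t(m_t)]}+d\theta\log T\bigr)$ (the paper's Lemma~\ref{lem:RTgt}, which you re-derive correctly), a comparator bound $\E[\sum_t g_t(m_t)] = O(C+\sum_t r_t\sigma_t^2+1)$ obtained by playing $u_t=\ell^*$ (the paper invokes the tracking-experts bound, Lemma~\ref{lem:TLE}; your direct projected-OGD derivation with the self-bounding gradient property $\|\nabla g_t(m)\|_2^2\le 4g_t(m)$ is a legitimate, essentially equivalent substitute for the static-comparator case), the lower bound $R_T(a^*)\ge\E[\sum_t\Delta(x_t)]-2C$, and the self-bounding closure $X=O(\sqrt{AX}+B)\Rightarrow X=O(A+B)$. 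All of these steps are correct and match the paper.

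The genuine gap is the geometric claim $r_t = O(\kappa\,\Delta(x_t)/\Delta_{\min})$, which you correctly identify as the crux but do not prove, and your sketch points at the wrong tool. Lemma~\ref{lem:boundpsi} (the bound on the barrier value $\psi$ via the Minkowsky function) plays no role here and cannot yield the estimate: what must be controlled is the Minkowsky gauge $\pi_{a^*,\cX}(x)$ of points $x$ in the Dikin ellipsoid, not the barrier value. The paper's proof (Lemma~\ref{lem:boundgamma}) consists of two elementary steps that are both absent from your sketch. First, for \emph{every} $x\in\cX$ one has $\pi_{a^*,\cX}(x)\le\Delta(x)/\Delta_{\min}$: write $x=\lambda x'+(1-\lambda)a^*$ with $x'\in\conv(\cA\setminus\{a^*\})$, note $\pi_{a^*,\cX}(x)\le\lambda$ since $a^*+\lambda^{-1}(x-a^*)=x'\in\cX$, and note $\Delta(x)=\lambda\Delta(x')\ge\lambda\Delta_{\min}$ — this is precisely where uniqueness of $a^*$ enters, guaranteeing $\Delta(x')\ge\Delta_{\min}$ on all of $\conv(\cA\setminus\{a^*\})$. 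Second, $\max_{x\in W_1(x_t)}\Delta(x)\le 2\Delta(x_t)$: because $W_1(x_t)$ is centrally symmetric about $x_t$, the maximal increase of $\linner\ell^*,\cdot\rinner$ over $W_1(x_t)$ equals the maximal decrease, and the decrease is at most $\Delta(x_t)$ since $W_1(x_t)\subseteq\cX$ (Lemma~\ref{lem:ellipsoid}) and $a^*$ minimizes $\linner\ell^*,\cdot\rinner$ over $\cX$. Chaining the two with the choice $z=a^*$ in \eqref{eq:aspzt} gives $r_t\le\kappa\max_{x\in W_1(x_t)}\pi_{a^*,\cX}(x)\le 2\kappa\Delta(x_t)/\Delta_{\min}$. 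Without these two observations your argument does not close; with them, the rest of your proposal reproduces the paper's proof of \eqref{eq:thmsto} exactly.
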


\begin{remark}
	\label{rem:sto}
	\upshape
	In standard settings of the stochastic regime,
	it is assumed that $f_t$ follows an identical distribution for different rounds
	and $\E [ \varepsilon_t(a) | (a_s)_{s=1}^{t-1} ] = 0$ for all $a$.
	Such assumptions are not,
	however,
	needed in Theorem~\ref{thm:sto}.
	In other words,
	even when $\xi_t = \E [ \varepsilon_t(a) | (a_s)_{s=1}^{t-1} ]$ is non-zero and changes depending on $t$,
	we still have the $O(\log T)$-regret bounds given in Theorem~\ref{thm:sto}.
\end{remark}

\subsection{Proof sketch}
Regret bounds in Theorems~\ref{thm:adv} and \ref{thm:sto} are derived from the following lemma:
\begin{lemma}
	\label{lem:RTgt}
	The regret for Algorithm~\ref{alg:FTRL} is bounded as follows:
	\begin{align}
		R_T
		=
		O \left(
			d
			\cdot
			\E \left[
				\sqrt{\theta \log T \sum_{t=1}^T g_t(m_t) }
			\right]
			+ d \theta \log T
		\right).
	\end{align}
\end{lemma}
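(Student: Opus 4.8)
The plan is to reduce the true regret to the linearized regret of the follow-the-regularized-leader iterates $x_t$ on the estimated loss vectors $\hat{\ell}_t$, and then invoke Lemma~\ref{lem:OFTRL}. First I would fix an arbitrary comparator $a^* \in \cA$ and show $R_T(a^*) = \E[\sum_{t=1}^T \linner \ell_t, x_t - a^* \rinner]$. Two facts drive this. Since the sampling randomness of $a_t$ is independent of the noise, the assumption $\E[\varepsilon_t(a) \mid (a_s)_{s=1}^{t-1}] = \xi_t$ (constant in $a$) makes the contributions $\E[\varepsilon_t(a_t) - \varepsilon_t(a^*)]$ vanish; and $\E[a_t \mid x_t] = x_t$ turns $\E[\linner \ell_t, a_t \rinner]$ into $\E[\linner \ell_t, x_t \rinner]$. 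Then, choosing a near-optimal interior comparator $x^* = (1-\gamma)a^* + \gamma x_1 \in \mathrm{int}(\cX)$ with $\gamma = 1/T$, where $x_1 = \argmin_x \psi$ is the analytic center, the unbiasedness $\E[\hat{\ell}_t \mid x_t] = \ell_t$ gives $\E[\sum_t \linner \ell_t, x_t - x^* \rinner] = \E[\sum_t \linner \hat{\ell}_t, x_t - x^* \rinner]$, while the shrinkage bias $\E[\sum_t \linner \ell_t, x^* - a^* \rinner] = \gamma\,\E[\sum_t \linner \ell_t, x_1 - a^* \rinner]$ is $O(\gamma T) = O(1)$ because $\|\ell_t\|_2 \le 1$ and $\|x_1 - a^*\|_2 \le 2$.

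Next I would apply Lemma~\ref{lem:OFTRL} with $\psi_t = \beta_t \psi$, bounding $\sum_t \linner \hat{\ell}_t, x_t - x^* \rinner$ by the sum of stability terms $\linner \hat{\ell}_t - m_t, x_t - \tilde{x}_{t+1} \rinner - \beta_t D_{\psi}(\tilde{x}_{t+1}, x_t)$ and the penalty $\beta_{T+1}\psi(x^*)$. The crucial observation for the stability terms is that, by \eqref{eq:defellhat}, $\hat{\ell}_t - m_t$ is a scalar multiple of the eigenvector $e_{i_t}$ of $\nabla^2\psi(x_t)$; since $(\nabla^2\psi(x_t))^{-1} e_{i_t} = \lambda_{i_t}^{-1} e_{i_t}$, the factors $\lambda_{i_t}^{1/2}$ and $\lambda_{i_t}^{-1/2}$ cancel and, using $b_t^2 = b_t$ and that the random sign squares to one, $\|\hat{\ell}_t - m_t\|_{x_t,\psi}^{*2} = d^2 b_t (f_t(a_t) - \linner m_t, a_t \rinner)^2 = d^2 g_t(m_t)$ as defined in \eqref{eq:defgt}. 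Because $f_t(a_t) \in [-1,1]$ and $\|m_t\|_2, \|a_t\|_2 \le 1$ give $g_t(m_t) \le 4$, the precondition $\|\hat{\ell}_t - m_t\|_{x_t,\psi}^* = d\sqrt{g_t(m_t)} \le 2d \le \beta_t/3$ of Lemma~\ref{lem:boundstability} holds thanks to $\beta_t \ge 6d$, and that lemma bounds each stability term by $\frac{2}{\beta_t}\|\hat{\ell}_t - m_t\|_{x_t,\psi}^{*2} = \frac{2 d^2 g_t(m_t)}{\beta_t}$.

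It then remains to control $\sum_t \frac{2 d^2 g_t(m_t)}{\beta_t} + \beta_{T+1}\psi(x^*)$ pathwise and take expectation. For the penalty, Lemma~\ref{lem:boundpsi} applied to $x^*$ and $x_1$ yields $\psi(x^*) \le \psi(x_1) + \theta\log\frac{1}{1 - \pi_{x_1,\cX}(x^*)}$, and since $\pi_{x_1,\cX}(x^*) = 1 - \gamma$ by construction and $\psi(x_1) = \min\psi$ may be normalized to $0$, this is $O(\theta\log T)$. Substituting \eqref{eq:defbetat}, the stability sum satisfies $\sum_t \frac{2 d^2 g_t(m_t)}{\beta_t} \le d\sqrt{\theta\log T}\sum_t \frac{g_t(m_t)}{\sqrt{\sum_{s<t} g_s(m_s)}}$, and a standard AdaGrad-type telescoping inequality of the form $\sum_t \frac{g_t}{\sqrt{\sum_{s \le t} g_s}} \le 2\sqrt{\sum_t g_t}$ — with the discrepancy between $\sum_{s<t}$ and $\sum_{s \le t}$ absorbed by the $6d$ offset and the bound $g_t(m_t) \le 4$ — gives $\sum_t \frac{2 d^2 g_t(m_t)}{\beta_t} = O(d\sqrt{\theta\log T\sum_t g_t(m_t)})$. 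The same estimate shows $\beta_{T+1} = O(d + d\sqrt{\sum_t g_t(m_t)/(\theta\log T)})$, so $\beta_{T+1}\psi(x^*) = O(d\theta\log T + d\sqrt{\theta\log T\sum_t g_t(m_t)})$; adding the two, maximizing over $a^*$, and taking the expectation (keeping it outside the square root, exactly as stated) proves the claim.

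I expect the penalty/comparator step, rather than the stability computation, to be the main obstacle: because the optimum $a^*$ lies on the boundary of $\cX$ where $\psi$ diverges, one must trade the shrinkage bias against the growth of $\psi(x^*)$, and making both small ($O(1)$ and $O(\theta\log T)$, respectively) simultaneously hinges on Lemma~\ref{lem:boundpsi} together with the precise $1/T$-scale shrinkage. A secondary technical point is the measurability bookkeeping needed to interchange expectation with the pathwise FTRL bound and to justify the two unbiasedness identities under the possibly nonzero, time-varying conditional noise mean $\xi_t$.
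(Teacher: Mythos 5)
Your proposal is correct and follows essentially the same route as the paper's own proof: the same shrunken interior comparator $x^* = (1-1/T)a^* + (1/T)\argmin\psi$ bounded via Lemma~\ref{lem:boundpsi}, the same unbiasedness identities, the same eigenvector cancellation giving $\|\hat{\ell}_t - m_t\|_{x_t,\psi}^{*2} = d^2 g_t(m_t)$ with Lemma~\ref{lem:boundstability}, and the same telescoping argument for $\sum_t d^2 g_t(m_t)/\beta_t$. The only nitpick is that $\pi_{x_1,\cX}(x^*)$ is $\leq 1-\gamma$ rather than $= 1-\gamma$ by construction, but the upper bound is exactly what the argument needs, so nothing breaks.
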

In proving this lemma,
we use Lemmas~\ref{lem:OFTRL}, \ref{lem:boundpsi} and \ref{lem:boundstability}.
From Lemma~\ref{lem:boundstability},
the stability term 
$\linner \hat{\ell}_t - m_t , x_t - \tilde{x}_{t+1} \rinner - D_{\psi_t} ( \tilde{x}_{t+1}, x_t )$ 
in Lemma~\ref{lem:OFTRL}
is bounded by
$\frac{2}{\beta_t} \| \hat{\ell}_t - m_t \|_{x, \psi}^{*2} = \frac{2}{\beta_t} d^2 g_t(m_t)$.
From Lemma~\ref{lem:boundpsi},
we can bound the penalty term $\psi_{T+1}(x^*)$ in Lemma~\ref{lem:OFTRL} as $ \psi_{T+1}(x^*) \leq \beta_{T+1} \theta \log T$.
Combining these bounds,
we obtain
$R_T = O \left( \E \left[ d^2 \sum_{t=1}^T \frac{g_t(m_t)}{\beta_t} + \beta_{T+1} \theta \log T \right] \right)$.
From this and the definition of $\beta_t$ given in \eqref{eq:defbetat},
we have the regret bound in Lemma~\ref{lem:RTgt}.
A complete proof of this lemma is given in Section~\ref{sec:unbiased} in the appendix.

From the result of \textit{tracking linear experts} \citep{herbster2001tracking},
we obtain the following upper bound on $\sum_{t=1}^T g_t(m_t)$.
\begin{lemma}
	\label{lem:TLE}
	If $m_t$ is given by \eqref{eq:updatemt},
	it holds for any sequence $( u_t )_{t=1}^{T+1} \in (B_2^d(1))^{T+1}$ that
	\begin{align}
		\label{eq:TLE}
		\sum_{t=1}^T g_t(m_t)
		\leq
		\frac{1}{1 - 2 \eta}
		\sum_{t=1}^T g_t(u_t)
		+
		\frac{1}{\eta(1 - 2 \eta)}
		\left(
			\sqrt{2}
			\sum_{t=1}^{T}
			\| u_{t+1} - u_t \|_2
			+
			\frac{1}{2} \| u_{T+1} \|_2^2
		\right) .
	\end{align}
\end{lemma}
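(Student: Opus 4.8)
The plan is to read \eqref{eq:TLE} as a shifting-regret (tracking) bound for online projected gradient descent applied to the convex per-round losses $g_t$. First I would record that, on any round with $b_t = 1$, $g_t$ is a squared linear loss with gradient $\nabla g_t(m) = 2 b_t (\langle a_t, m\rangle - f_t(a_t))\, a_t$, so the update \eqref{eq:updatemt} is exactly a gradient step $m'_{t+1} = m_t - \frac{\eta}{2}\nabla g_t(m_t)$ followed by Euclidean projection onto $B_2^d(1)$; on rounds with $b_t = 0$ both $g_t$ and the update vanish. Writing $\delta_t = \langle a_t, m_t\rangle - f_t(a_t)$ and $\delta_t^u = \langle a_t, u_t\rangle - f_t(a_t)$, so that $g_t(m_t) = b_t \delta_t^2$ and $g_t(u_t) = b_t (\delta_t^u)^2$, and using that every comparator satisfies $u_t \in B_2^d(1)$, the projection is nonexpansive toward $u_t$, giving $\|m_{t+1}-u_t\|_2 \le \|m'_{t+1}-u_t\|_2$.

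Next I would derive the one-step inequality. Expanding $\|m'_{t+1}-u_t\|_2^2 = \|m_t - u_t\|_2^2 - 2\eta b_t \delta_t \langle a_t, m_t-u_t\rangle + \eta^2 b_t^2 \delta_t^2 \|a_t\|_2^2$, then using $\langle a_t, m_t-u_t\rangle = \delta_t - \delta_t^u$, $b_t^2 = b_t$, and $\|a_t\|_2 \le 1$ to bound the quadratic term by $\eta^2 g_t(m_t)$, a completion of the square yields the pathwise bound $\eta(1-2\eta)\, g_t(m_t) - \eta\, g_t(u_t) \le \|m_t-u_t\|_2^2 - \|m_{t+1}-u_t\|_2^2$, the slack being exactly $\eta(\delta_t-\delta_t^u)^2 + \eta^2 \delta_t^2 \ge 0$. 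This is where the constant $1-2\eta$ enters: it is precisely what lets the gradient-norm term be absorbed into the left-hand side, and it is positive for $\eta < 1/2$, in particular for the algorithm's choice $\eta \in (0,1/4)$. On rounds with $b_t = 0$ the inequality holds trivially since all terms vanish.

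Then I would sum over $t = 1, \ldots, T$ and telescope against the moving comparator. Rewriting each $\|m_{t+1}-u_t\|_2^2$ through $\|m_{t+1}-u_{t+1}\|_2^2$, the squared-norm terms $\|u_t\|_2^2$ telescope and—crucially—the boundary term $\|m_1-u_1\|_2^2 = \|u_1\|_2^2$ (recall $m_1 = 0$) cancels against the $-\|u_1\|_2^2$ produced by the telescoping, leaving only $\|u_{T+1}\|_2^2$ at the right end; this is why \eqref{eq:TLE} features $\|u_{T+1}\|_2^2$ rather than $\|u_1\|_2^2$. The remaining cross terms $-2\langle m_{t+1}, u_{t+1}-u_t\rangle$ are controlled by Cauchy--Schwarz together with $\|m_{t+1}\|_2 \le 1$, producing a term proportional to $\sum_{t=1}^T \|u_{t+1}-u_t\|_2$. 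Dividing through by $\eta(1-2\eta)$ then gives \eqref{eq:TLE}.

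The main obstacle I expect is the bookkeeping for the shifting comparator, and in particular pinning down the precise constants ($\sqrt{2}$ on the path length and $\frac12$ on the endpoint term) rather than the cruder ones a naive bound gives; obtaining them requires retaining, instead of discarding, the $-\|m_{T+1}-u_{T+1}\|_2^2$ and $-\|u_{t+1}-u_t\|_2^2$ terms and completing squares carefully, matching the tracking-the-best-linear-predictor estimate of \citet{herbster2001tracking}. I would also stress that the entire argument is a deterministic, pathwise inequality valid for every realization of $b_t, a_t, f_t$, which is exactly what permits it to be inserted inside the expectation when it is combined with Lemma~\ref{lem:RTgt}.
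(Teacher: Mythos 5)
The paper does not actually prove this lemma: its entire ``proof'' is the single sentence that the statement is a special case of Theorem~11.4 of \citet{cesa2006prediction}. Your proposal therefore takes a genuinely different, self-contained route, re-deriving the tracking bound as a projected-gradient-descent analysis, which is essentially the textbook argument behind the cited theorem. Your key steps check out: the identification of \eqref{eq:updatemt} as a gradient step of size $\eta/2$ on $g_t$ followed by Euclidean projection onto $B_2^d(1)$ is correct; the one-step inequality $\eta(1-2\eta)\,g_t(m_t) - \eta\, g_t(u_t) \le \|m_t-u_t\|_2^2 - \|m_{t+1}-u_t\|_2^2$ is algebraically correct (the slack is exactly $\eta(\delta_t-\delta_t^u)^2 + \eta^2\delta_t^2$ when $\|a_t\|_2=1$ and larger otherwise, and the projection step uses $u_t \in B_2^d(1)$); and the telescoping you describe is an exact identity: using $m_1=0$, the sum of the right-hand sides equals $\|u_{T+1}\|_2^2 - \|m_{T+1}-u_{T+1}\|_2^2 - 2\sum_{t=1}^T \langle m_{t+1}, u_{t+1}-u_t\rangle$, with the $\|u_1\|_2^2$ boundary term cancelling as you say. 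The observation that the whole argument is pathwise, hence compatible with taking expectations later, is also correct and worth stating.

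The one caveat is the constants. Bounding the cross terms via Cauchy--Schwarz with $\|m_{t+1}\|_2 \le 1$ and discarding $-\|m_{T+1}-u_{T+1}\|_2^2$ gives $2\sum_{t=1}^T\|u_{t+1}-u_t\|_2 + \|u_{T+1}\|_2^2$, i.e., coefficients $(2,1)$ where \eqref{eq:TLE} asserts $(\sqrt{2},\tfrac12)$. Your closing paragraph concedes this and gestures at ``completing squares carefully'' in the spirit of \citet{herbster2001tracking}, but that step is not carried out, and it is not routine: for a constant comparator $u_t \equiv u$ the exact telescoped quantity is $\|u\|_2^2 - \|m_{T+1}-u\|_2^2$, which genuinely approaches $\|u\|_2^2$ when the iterates converge to $u$, so the coefficient $\tfrac12$ cannot come from rearranging the telescope alone --- it requires feeding back the slack terms retained from the one-step inequalities (or inheriting the constants from the cited theorem, as the paper does). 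Since every downstream use of Lemma~\ref{lem:TLE} (in Theorems~\ref{thm:adv} and \ref{thm:sto}) is an $O(\cdot)$ bound, your weaker constants would serve the paper equally well; but as a proof of the inequality as literally stated, the constant-matching step is missing.
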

This lemma is a special case of Theorem 11.4 by \citet{cesa2006prediction}.

\paragraph{Proof sketch of Theorem~\ref{thm:adv}}
By substituting $u_t = \bar{\ell} \in \argmin_{\ell} \sum_{t=1}^T \| \ell_t - {\ell} \|_2^2$ for all $t$ in \eqref{eq:TLE},
we obtain
$
	\E \left[ \sum_{t=1}^T g_t (m_t) \right]
	=
	O \left( Q + \E \left[ \sum_{t=1}^T (\varepsilon_t(a_t))^2 \right] + 1 \right)
$.
Similarly,
by substituting $u_t = \ell_{t}$ for all $t$ in \eqref{eq:TLE},
we obtain
$
	\E \left[ \sum_{t=1}^T g_t (m_t) \right]
	=
	O \left( P + \E \left[ \sum_{t=1}^T (\varepsilon_t(a_t))^2 \right] + 1 \right)
$.
Combining these with Lemma~\ref{lem:RTgt},
we obtain \eqref{eq:adv1} in Theorem~\ref{thm:adv}.
Further,
if $f_t(a) \geq 0$,
by substituting $u_t=0$,
we obtain
$
	\E \left[ \sum_{t=1}^T g_t (m_t) \right]
	=
	O \left( L^* + R_T + 1 \right)
$,
which leads to a regret bound of
$
R_T = O \left( d \sqrt{\theta \log T \left( L^* + R_T \right)} + d \theta \log T \right)
$.
This implies that \eqref{eq:adv2} in Theorem~\ref{thm:adv} holds.

\paragraph{Proof sketch of Theorem~\ref{thm:sto}}
By setting $u_t = \ell^*$ for all $t$ in \eqref{eq:TLE},
we obtain
$
	\E \left[
		\sum_{t=1}^T g_t (m_t)
	\right]
	=
	O \left(
		\E \left[ C + \sum_{t=1}^T r_t \sigma_t^2 + 1 \right]
	\right)
$.
As we have $r_t \leq 1$,
from this bound and Lemma~\ref{lem:RTgt},
we have
$
R_T = O \left(
	d \sqrt{ (C + \sum_{t=1}^T \sigma_t^2 ) \theta \log T} + d \theta \log T
\right)
$.
We also have the following regret bound:
\begin{align}
	\label{eq:boundRTs0}
	R_T = O \left(
		d \sqrt{ \left(C + \sigma^2 \E \left[ \sum_{t=1}^T r_t \right] \right) \theta \log T} + d \theta \log T
	\right).
\end{align}
From the assumption of \eqref{eq:aspzt},
$r_t$ is bounded as
\begin{align}
	\label{eq:boundrt}
	r_t
	\leq
	\kappa \cdot \min_{z \in \cA} \left\{  \max_{x \in \cE_t} \pi_{z, \cX}(x) \right\}
	\leq
	\kappa \cdot \min_{z \in \cA} \left\{ \max_{x \in W_1( x_t )} \pi_{z, \cX}(x) \right\},
\end{align}
where the second inequality follows from $\cE_t \subseteq W_1(x_t)$.
The following lemma provides an upper bound on the right-hand side of this:
\begin{lemma}
	\label{lem:boundgamma}
	Suppose $a^* \in \argmin_{a \in \cA} \linner \ell^*, a \rinner$ uniquely exists.
	It holds for any $y \in \mathrm{int}(\cX)$ that
	\begin{align}
		\max_{x \in W_1( y )} \pi_{a^*, \cX}(x) 
		\leq
		2 \frac{\Delta(y)}{\Delta_{\min}},
		\quad
		\mbox{where}
		\quad
		\Delta(y) = \linner \ell^*,  y - a^* \rinner ,
		\quad
		\Delta_{\min} = \min_{a \in \cA \setminus \{ a^* \}} \Delta(a).
	\end{align}
\end{lemma}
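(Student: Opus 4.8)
The plan is to control the gauge $\pi_{a^*, \cX}$ by the linear functional $\Delta$ and then to control $\Delta$ over the Dikin ellipsoid. First I would record the elementary facts that follow from $a^*$ being the unique minimizer of $\linner \ell^*, \cdot \rinner$ over $\cA$, hence over $\cX = \conv(\cA)$: the affine function $u \mapsto \Delta(u) = \linner \ell^*, u - a^* \rinner$ satisfies $\Delta(u) \ge 0$ for all $u \in \cX$ and vanishes only at $a^*$, and every $u \in \conv(\cA \setminus \{a^*\})$ satisfies $\Delta(u) \ge \Delta_{\min}$ since each of its generators does.

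The key step is the pointwise bound $\pi_{a^*, \cX}(x) \le \Delta(x)/\Delta_{\min}$ for every $x \in \cX$. To prove it I would take any representation of $x$ as a convex combination of points of $\cA$, split off the weight $\mu_0$ placed on $a^*$, and write the remainder as $(1 - \mu_0)\, u'$ with $u' \in \conv(\cA \setminus \{a^*\})$ (the degenerate case $\mu_0 = 1$, i.e. $x = a^*$, is trivial since then $\Delta(x) = 0$ and $\pi_{a^*,\cX}(x) = 0$). Then $x - a^* = (1 - \mu_0)(u' - a^*)$, so $u' = a^* + (1 - \mu_0)^{-1}(x - a^*)$ lies on the ray from $a^*$ through $x$ and belongs to $\cX$; by the definition \eqref{eq:defMinkowsky} of the Minkowsky function this already witnesses $\pi_{a^*, \cX}(x) \le 1 - \mu_0$. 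Finally $\Delta(x) = (1 - \mu_0)\Delta(u') \ge (1 - \mu_0)\Delta_{\min}$ gives $1 - \mu_0 \le \Delta(x)/\Delta_{\min}$, and the claim follows. I expect this to be the main obstacle: the tempting route of analyzing the true exit point of the ray is awkward because that point's $\Delta$-value need not equal $\Delta_{\min}$, whereas exhibiting the single feasible scaling $u'$ is enough to upper bound the gauge.

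Next I would bound $\Delta$ on the Dikin ellipsoid, showing $\Delta(x) \le 2\Delta(y)$ for every $x \in W_1(y)$. Since $W_1(y) \subseteq \cX$ by Lemma~\ref{lem:ellipsoid} and $\Delta \ge 0$ on $\cX$, the minimum of the linear function $\Delta$ over $W_1(y)$ is nonnegative; as the extremum of $\linner \ell^*, x - y \rinner$ over $\| x - y \|_{y, \psi} \le 1$ is $\pm \| \ell^* \|_{y, \psi}^*$ by Cauchy--Schwarz in the local norm, this minimum equals $\Delta(y) - \| \ell^* \|_{y, \psi}^*$, so $\| \ell^* \|_{y, \psi}^* \le \Delta(y)$. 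Applying the same bound to the maximum then yields $\max_{x \in W_1(y)} \Delta(x) = \Delta(y) + \| \ell^* \|_{y, \psi}^* \le 2\Delta(y)$.

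Combining the two steps closes the argument: for every $x \in W_1(y)$ we have $\pi_{a^*, \cX}(x) \le \Delta(x)/\Delta_{\min} \le 2\Delta(y)/\Delta_{\min}$, and taking the maximum over $x \in W_1(y)$ gives the statement. The only routine points I would verify along the way are that uniqueness of $a^*$ over $\cA$ upgrades to uniqueness over $\cX$ (so $\Delta$ vanishes only at $a^*$) and that $\conv(\cA \setminus \{a^*\})$ inherits the uniform lower bound $\Delta \ge \Delta_{\min}$ from its generators.
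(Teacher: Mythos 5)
Your proof is correct and follows essentially the same two-step route as the paper: the pointwise bound $\pi_{a^*,\cX}(x) \leq \Delta(x)/\Delta_{\min}$ via splitting off the convex weight on $a^*$ (your $1-\mu_0$ is the paper's $\lambda$), followed by $\max_{x \in W_1(y)}\Delta(x) \leq 2\Delta(y)$ using the central symmetry of the ellipsoid together with $W_1(y) \subseteq \cX$ and nonnegativity of $\Delta$ on $\cX$. The only cosmetic difference is that you make the extremes explicit via the dual local norm $\|\ell^*\|_{y,\psi}^*$, whereas the paper equates the max and min deviations by symmetry and bounds the min by $\min_{x\in\cX}\linner \ell^*, x\rinner$ directly.
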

By combining this lemma with \eqref{eq:boundRTs0} and \eqref{eq:boundrt},
we obtain
a bound depending on $\sum_{t=1}^T \Delta(x_t)$ as follows:
$
R_T = O \left(
	d \sqrt{ \left(C + \frac{\kappa \sigma^2}{\Delta_{\min}} \E \left[ \sum_{t=1}^T \Delta (x_t) \right] \right) \theta \log T} + d \theta \log T
\right).
$
On the other hand,
regret is bounded from below as
$
R_T(a^*) \geq \E \left[ \sum_{t=1}^T \Delta (x_t) \right] - 2 C  
$.
By combining these two bounds on $R_T$,
we obtain
\begin{align*}
	R_T(a^*)
	&
	=
	O \left(
		d \sqrt{
			\theta \log T
			\cdot
				\left(
				C +
				\frac{\kappa \sigma^2}{\Delta_{\min}}
				(R_T + C)
				\right)
		}
		+
		d \theta \log T
	\right) 
	\\
	&
	=
	O \left(
		d \sqrt{
			\frac{ \theta \kappa \sigma^2 \log T}{\Delta_{\min}}
			R_T(a^*)
		}
		+
		d
		\sqrt{
			\left(
				\frac{\kappa \sigma^2}{\Delta_{\min}} + 1
			\right)
			C
			\theta \log T
		}
		+
		d \theta \log T
	\right) .
\end{align*}
As $X = O( \sqrt{AX} + B)$ implies $X = O(A + B)$,
we have
\begin{align*}
	R_T(a^*)
	&
	=
	O \left(
		\frac{ d^2 \theta \kappa \sigma^2 \log T}{\Delta_{\min}}
		+
		d
		\sqrt{
			\left(
				\frac{\kappa \sigma^2}{\Delta_{\min}} + 1
			\right)
			C
			\theta \log T
		}
		+
		d \theta \log T
	\right)
	\\
	&=
	O \left(
		\left(
		\frac{ d \kappa \sigma^2}{\Delta_{\min}}
		+
		1
		\right)
		d
		\theta 
		\log T
		+
		d
		\sqrt{
			\left(
				\frac{\kappa \sigma^2}{\Delta_{\min}} + 1
			\right)
			C
			\theta \log T
		}
	\right) ,
\end{align*}
which means that \eqref{eq:thmsto} holds.
A complete proof is given in Section~\ref{sec:prfsto} of the appendix.

\bibliographystyle{abbrvnat}
\bibliography{reference}

\newpage
\appendix

\section{Related Work}
\paragraph{Best-of-Both-Worlds Bandit Algorithms}
Best-of-both-worlds algorithms have been developed for various settings of multi-armed bandit (MAB) problems,
including the standard MAB problem \citep{bubeck2012best,seldin2014one,zimmert2021tsallis,ito2022adversarially,honda2023follow},
combinatorial semi-bandits \citep{zimmert2019beating,ito2021hybrid,tsuchiya2023further},
partial monitoring problems \citep{tsuchiya2023best},
episodic Markov decision processes \citep{jin2020simultaneously,jin2021best},
and linear bandits \citep{lee2021achieving}.
While most of these studies focuses only on high-level adaptability,
the algorithms by \citet{ito2022adversarially,tsuchiya2023best} for
the MAB problem and combinatorial semi-bandit problems
have low-level adaptability as well,
similarly to our proposed algorithm.
In fact,
their algorithms are best-of-three-worlds algorithms with multiple data-dependent regret bounds as well as
variance-adaptive regret bounds.
Their algorithms are also similar to ours in that it is based on the optimistic follow-the-regularizer approach
with an adaptive learning rate.
As the class of linear bandits problem includes the multi-armed bandit problem,
the results in this paper can be interpreted as an extension of their results.
Regret bounds by \citet{ito2022adversarially} are,
however,
better than ours in terms of the dependency on the dimensionality of the action set (or the number of arms)
and in that they depend on arm-wise sub-optimality gaps.

\paragraph{Adversarial Corruption}
There are several studies on the stochastic environment with adversarial corruption in the linear bandit problem~\citep{li2019stochastic,bogunovic2021stochastic,lee2021achieving} and the sibling problems such as the multi-armed bandits~\citep{lykouris2018stochastic,gupta2019better,zimmert2021tsallis,yang2020adversarial} and the linear Markov decision processes~\citep{lykouris2021corruption}.
These studies and this paper have different assumptions and regret.
This paper and \citet{lee2021achieving} assume that corruption depends only on information in the past rounds and is an affine function of the chosen action.
On the other hand, \citet{li2019stochastic,bogunovic2021stochastic} allow corruption to be any (possibly non-linear) function.
Furthermore, \citet{bogunovic2021stochastic} consider the corruption that depends on the action chosen in that round.
We also note that the definitions of the corruption level in these studies are slightly different.
While this paper includes the corruption in regret, \citet{li2019stochastic,bogunovic2021stochastic,lee2021achieving} do not.
It is known that we can convert one to the other by an additional $O(C)$-regret.
Moreover, the regret bounds in these existing studies have linear terms with respect to $C$.
Thus, our regret bound for the corrupted stochastic regime have the same dependence of the corruption as in these studies, but not vice versa.

\paragraph{Misspecified Linear Contextual Bandits}
The corrupted stochastic regime is a special case of the misspecified linear contextual bandits without knowledge of the misspecification~\citep{lattimore2020learning,foster2020adapting,pacchiano2020model,takemura2021parameter,krishnamurthy2021adapting}.\footnote{
Note that some studies assume oblivious adversary~\citep{lattimore2020learning,foster2020adapting,krishnamurthy2021adapting}, i.e., the approximation errors do not depend on the actions chosen in the past.}
This problem assumes that the expected loss functions can be approximated by a linear function.
While the approximation error can be any function of the information in the past and the current rounds in general,
the corrupted stochastic regime assumes that the approximation error is an affine function of the action chosen in the current round.
It is an open question whether the proposed algorithm can obtain a regret upper bound similar to the known regret bounds for this problem when the approximation error can be non-linear.

\section{Pseudocode of the proposed algorithm}
\label{sec:pseudocode}

\begin{algorithm}[h]
\caption{}
\label{alg:FTRL}
\begin{algorithmic}[1]
	\REQUIRE{$T$: time horizon, $d$: dimensionality of action set, $\cA \subseteq \re^d$: action set, $\psi$: self-concordant barrier over $\cX = \conv ( \cA )$,
	$\theta \geq 1$: self-concordance parameter of $\psi$, $\eta \in (0, 1/4)$: learning rate for optimistic prediction}
	\STATE Set $m_1 = 0$.
	\FOR{$t=1, 2, \ldots, T$}
	\STATE Set $\beta_t$ by \eqref{eq:defbetat} and compute $x_t$ defined by \eqref{eq:defOFTRL}.
	\STATE Let $\{ e_1, \ldots, e_d  \}$ and $\{ \lambda_1, \ldots, \lambda_d \}$ be the set of eigenvectors and eigenvalues of $\nabla^2 \psi(x_t)$ and set $\cE_t := \{ x_t + \lambda_i^{-1/2} e_i \mid i \in [d] \} \cup \{ x_t - \lambda_i^{-1/2} e_i \mid i \in [d] \} $.
	\STATE Choose $z_t \in \cA$ and set $r_t \in (0, 1]$ by \eqref{eq:defrt}.
	\STATE Set $b_t = 1$ with probability $r_t$ and set $b_t = 0$ with probability $(1-r_t)$.
	\IF{$b_t = 1$}
	\STATE Choose $i_t $ from $[d]$ uniformly at random and set $\epsilon_t \pm 1$ with probability $1/2$.
	\STATE Set $a'_t = z_t + r_t^{-1}( x_t + \epsilon_t \lambda_{i_{t}}^{-1/2} e_{i_t} - z_t )$.
	\ELSE
	\STATE Set $a'_t = z_t$.
	\ENDIF
	\STATE Output $a_t \in \cA$ so that $\E[a_t] = a'_t$ and get feedback of $f_t(a_t)$.
	\STATE Compute $\hat{\ell}_t$ defined by \eqref{eq:defellhat} and update $m_t$ by \eqref{eq:updatemt}.
	\ENDFOR
\end{algorithmic}
\end{algorithm}

\section{Proof of Lemma~\ref{lem:boundstability}}
For the convex function $\psi$ and $x \in \mathrm{dom}(\psi)$,
denote \textit{the Newton decrement} at point $x$ by $\lambda(x, \psi)$,
i.e.,
$
\lambda(x, \psi) = \|\nabla\psi(x)\|_{x,\psi}^*
$.
\begin{lemma}[Theorem 2.2.1 by \citet{nesterov1994interior}]\label{lem:sc_hessian}
    Let $\mathcal{S}$ be an open non-empty convex subset of a finite-dimensional real vector space.
    Let $\psi$ be a self-concordant function on $\mathcal{S}$ and $x \in \mathcal{S}$.
    Then,
    for each $y \in \mathcal{S}$ such that $\|x - y\|_{x, \psi} < 1$,
    we have
    \begin{align*}
        (1 - \|x - y\|_{x, \psi})^2 \nabla^2 \psi(y)
        \preceq \nabla^2 \psi(x)
        \preceq (1 - \|x - y\|_{x, \psi})^{-2} \nabla^2 \psi(y)
    \end{align*}
\end{lemma}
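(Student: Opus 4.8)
The plan is to reduce the two-sided matrix inequality to a one-dimensional comparison along the segment joining $x$ and $y$, governed by an ordinary differential inequality extracted from self-concordance. Fix a direction $h \in \re^d$; since $\nabla^2 \psi(x)$ has full rank we may assume $h \neq 0$, and we may also assume $u := y - x \neq 0$ (otherwise $x = y$ and the claim is trivial). Put $x(t) = x + t u$ for $t \in [0,1]$, which stays in $\mathcal{S}$ by convexity, and define the two scalar functions $\phi(t) = D^2 \psi(x(t))[h,h] = \|h\|_{x(t),\psi}^2$ and $r(t) = \|u\|_{x(t),\psi}$. By hypothesis $r(0) = \|x-y\|_{x,\psi} < 1$, and $\phi(0) = D^2\psi(x)[h,h] > 0$. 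The goal is to show $(1 - r(0))^2 \le \phi(1)/\phi(0) \le (1 - r(0))^{-2}$, since evaluating at $\phi(1) = D^2\psi(y)[h,h]$ and letting $h$ range over all of $\re^d$ is exactly the asserted ordering of $\nabla^2\psi(x)$ and $\nabla^2\psi(y)$.

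The first ingredient I would establish is the mixed-direction form of the self-concordance inequality, $|D^3\psi(w)[v_1,v_2,v_3]| \le 2 \|v_1\|_{w,\psi}\|v_2\|_{w,\psi}\|v_3\|_{w,\psi}$ for all $w \in \mathcal{S}$. This is a standard consequence of the diagonal bound $|D^3\psi(w)[v,v,v]| \le 2(D^2\psi(w)[v,v])^{3/2}$ given in the definition: $D^3\psi(w)$ is a symmetric trilinear form, and by Banach's theorem on symmetric multilinear forms over a real Hilbert space its full (injective) norm equals its diagonal norm, both measured in the inner product $\langle v_1, v_2\rangle_w = v_2^\top \nabla^2\psi(w) v_1$. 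Crucially, this preserves the sharp constant $2$, which is what makes the final exponents exactly $\pm 2$.

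With this in hand, note $\phi'(t) = D^3\psi(x(t))[u,h,h]$ and, writing $\rho(t) = r(t)^2 = D^2\psi(x(t))[u,u]$, also $\rho'(t) = D^3\psi(x(t))[u,u,u]$. The diagonal bound gives $|\rho'(t)| \le 2 \rho(t)^{3/2}$, hence $|r'(t)| \le r(t)^2$ and therefore $\bigl|\tfrac{d}{dt} r(t)^{-1}\bigr| \le 1$; integrating yields $r(t) \le r(0)/(1 - t\, r(0))$ on $[0,1]$, which stays finite and positive because $r(0) < 1$, and in turn $\int_0^1 r(t)\,dt \le -\log(1 - r(0))$. The mixed bound gives $|\phi'(t)| \le 2\, r(t)\, \phi(t)$, i.e. $\bigl|\tfrac{d}{dt}\log\phi(t)\bigr| \le 2\, r(t)$ (valid since $\phi$ stays strictly positive, being a solution of a linear differential inequality with $\phi(0) > 0$). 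Integrating from $0$ to $1$ gives $\bigl|\log\phi(1) - \log\phi(0)\bigr| \le 2\int_0^1 r(t)\,dt \le -2\log(1 - r(0))$, and exponentiating delivers $(1 - r(0))^2 \le \phi(1)/\phi(0) \le (1 - r(0))^{-2}$, as required.

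The step I expect to require the most care is the upgrade from the diagonal to the mixed self-concordance bound with the sharp constant: naive polarization of the cubic form $v \mapsto D^3\psi(w)[v,v,v]$ loses a numerical factor, so one must invoke the Banach-type equality of the diagonal and injective norms for symmetric forms on a Hilbert space (as done by \citet{nesterov1994interior}) to keep the constant $2$ intact. The remaining bookkeeping—justifying the differentiation of $\phi$ and $r$ under the $C^3$ assumption and verifying that both remain strictly positive throughout $[0,1]$—is routine once positivity at $t=0$ and the differential inequalities are established, after which the ODE integration is straightforward.
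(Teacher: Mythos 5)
Your proof is correct: the paper itself does not prove this lemma but imports it verbatim as Theorem 2.2.1 of \citet{nesterov1994interior}, and your argument reconstructs exactly the standard proof from that source---upgrading the diagonal self-concordance bound to the mixed bound $|D^3\psi(w)[v_1,v_2,v_3]| \le 2\|v_1\|_{w,\psi}\|v_2\|_{w,\psi}\|v_3\|_{w,\psi}$ via the Banach-type equality of diagonal and full norms for symmetric multilinear forms on a Hilbert space (which is indeed what preserves the sharp constant $2$ and hence the exponents $\pm 2$), then integrating the differential inequalities $|r'(t)| \le r(t)^2$ and $\bigl|\tfrac{d}{dt}\log\phi(t)\bigr| \le 2r(t)$ along the segment. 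The remaining caveats you flag are routine and covered by the paper's standing assumption that $\nabla^2\psi$ has full rank, which keeps $r(t)$ and $\phi(t)$ strictly positive so that the reciprocal and logarithmic derivatives are legitimate throughout $[0,1]$.
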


\begin{lemma}[(2.21) by \citet{nemirovski2004interior}]\label{lem:nd_property}
    Let $\psi$ be a self-concordant function on $\mathcal{X}$.
    If $\lambda(x, \psi) < 1$, we have
    \begin{align*}
        \|x-x^*\|_{x, \psi} \le \frac{\lambda(x, \psi)}{1 - \lambda(x, \psi)},
    \end{align*}
    where $x^* \in \argmin_{y} \psi(y)$.
\end{lemma}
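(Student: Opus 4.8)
The plan is to reduce the statement to a one-dimensional computation along the segment joining $x$ to $x^*$ and to exploit the defining differential inequality of self-concordance directly. Write $h = x^* - x$ and $\phi(t) = \psi(x + t h)$ for $t \in [0,1]$; since $\psi$ is self-concordant its domain is open and convex and contains both endpoints, so $\phi$ is well defined and of class $C^3$. Set $r = \|h\|_{x,\psi} = \sqrt{\phi''(0)}$ and abbreviate $\lambda = \lambda(x,\psi)$. If $r = 0$ then $h = 0$ (the Hessian is assumed full rank), so $x = x^*$ and the bound is trivial; hence I may assume $r > 0$, and then $\phi''(t) = \|h\|_{x+th,\psi}^2 > 0$ for all $t$. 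The two endpoint facts I will use are $\phi'(0) = \langle \nabla\psi(x), h\rangle$ and $\phi'(1) = \langle\nabla\psi(x^*), h\rangle = 0$, the latter because $x^*$ is an interior minimizer of $\psi$ (its domain is open and condition (ii) forces the minimum to be attained in the interior), so $\nabla\psi(x^*) = 0$.

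The key step is to obtain a \emph{global} lower bound on $\phi''$. The definition of self-concordance gives $|\phi'''(t)| = |D^3\psi(x+th)[h,h,h]| \le 2(D^2\psi(x+th)[h,h])^{3/2} = 2\,\phi''(t)^{3/2}$. Consequently $\bigl|\frac{d}{dt}\phi''(t)^{-1/2}\bigr| = \tfrac{1}{2}\phi''(t)^{-3/2}|\phi'''(t)| \le 1$, so $t \mapsto \phi''(t)^{-1/2}$ is $1$-Lipschitz. Integrating from $0$ gives $\phi''(t)^{-1/2} \le \phi''(0)^{-1/2} + t = \tfrac{1}{r} + t$, hence $\phi''(t) \ge \frac{r^2}{(1 + r t)^2}$ for every $t \in [0,1]$. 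I want to stress that this bound holds on the whole interval with no a priori restriction such as $r < 1$; avoiding that restriction is exactly what sidesteps the circularity one would face trying instead to invoke the Hessian comparison in Lemma~\ref{lem:sc_hessian}, whose lower bound is only valid when $\|x - y\|_{x,\psi} < 1$ — the very quantity we are trying to bound.

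It then remains to integrate and apply Cauchy--Schwarz. Using the lower bound on $\phi''$,
\[
-\langle\nabla\psi(x), h\rangle = \phi'(1) - \phi'(0) = \int_0^1 \phi''(t)\,dt \ge \int_0^1 \frac{r^2}{(1+rt)^2}\,dt = \frac{r^2}{1+r}.
\]
On the other hand, Cauchy--Schwarz in the local inner product, together with $\|\nabla\psi(x)\|_{x,\psi}^* = \lambda$, gives $-\langle\nabla\psi(x), h\rangle \le \|\nabla\psi(x)\|_{x,\psi}^*\,\|h\|_{x,\psi} = \lambda r$. Combining the two bounds yields $\lambda r \ge \frac{r^2}{1+r}$, i.e.\ $\lambda \ge \frac{r}{1+r}$; solving for $r$ and using the hypothesis $\lambda < 1$ gives $r \le \frac{\lambda}{1-\lambda}$, which is the claim.

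The only genuinely nontrivial point is the global validity of the lower bound $\phi''(t) \ge r^2/(1+rt)^2$, and I expect this to be the main obstacle only in the sense that one must argue it carefully throughout $[0,1]$ — including the regime $rt \ge 1$ — directly from the $1$-Lipschitz property of $\phi''^{-1/2}$, rather than from a Hessian comparison that would require $r < 1$ and hence beg the question. Everything else reduces to a single one-variable integral and Cauchy--Schwarz.
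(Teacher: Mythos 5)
Your proof is correct, and it is worth noting that the paper itself contains no proof of Lemma~\ref{lem:nd_property} to compare against: the result is imported verbatim as inequality (2.21) of \citet{nemirovski2004interior}. Your argument is essentially the standard one from that source. Reducing to the restriction $\phi(t)=\psi(x+th)$ with $h=x^*-x$, observing that self-concordance makes $t\mapsto \phi''(t)^{-1/2}$ a $1$-Lipschitz function, integrating to get the global lower bound $\phi''(t)\ge r^2/(1+rt)^2$ with $r=\|x^*-x\|_{x,\psi}$, then combining $\phi'(1)=0$ (interior minimizer) with Cauchy--Schwarz in the local norm to obtain $\lambda r \ge r^2/(1+r)$, and finally solving $r(1-\lambda)\le\lambda$ under the hypothesis $\lambda<1$ --- every step checks out. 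In particular, the full-rank Hessian assumption stated in Section~\ref{sec:self-concordant} justifies $\phi''>0$ along the segment (which lies in the open domain by convexity), and the integral $\int_0^1 r^2(1+rt)^{-2}\,dt = r^2/(1+r)$ is computed correctly. Your remark about circularity is also well taken: Lemma~\ref{lem:sc_hessian} lower-bounds the Hessian only when $\|x-y\|_{x,\psi}<1$, which is precisely the quantity being controlled, so the Lipschitz property of $(\phi'')^{-1/2}$, valid with no a priori restriction on $r$, is indeed the right tool. One cosmetic caveat: condition (ii) of the definition of a self-concordant function does not by itself guarantee that a minimizer exists (consider $\psi(x)=-\log x$ on $(0,\infty)$); the full statement in \citet{nemirovski2004interior} in fact asserts existence of the minimizer as part of the conclusion when $\lambda(x,\psi)<1$. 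But since the lemma as stated in this paper posits $x^*\in\argmin_y \psi(y)$, and any minimizer over the open domain satisfies $\nabla\psi(x^*)=0$ --- which is all your proof uses --- this does not affect the correctness of your argument for the statement as given.
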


\begin{lemma}\label{lem:stability_term_bound_prox}
    Let $\psi$ be a self-concordant function on $\mathcal{X}$ and $x, y \in \mathrm{int}(\mathcal{X})$.
    Suppose that $\|x - y\|_{x, \psi} \le 1/2$.
    Then, we have
    \begin{align*}
        \langle \ell, x-y \rangle - \beta D_{\psi}(y, x) \le \frac{2}{\beta}\|\ell\|_{x, \psi}^{*2}
    \end{align*}
    for all $\ell \in \mathbb{R}^d$ and $\beta > 0$.
\end{lemma}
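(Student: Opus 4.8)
The plan is to reduce the claim to a one-dimensional optimization by controlling the linear term $\langle \ell, x-y\rangle$ and the Bregman term $D_{\psi}(y,x)$ separately in terms of the single scalar $\rho := \|x-y\|_{x,\psi}$. The linear term is handled immediately by the generalized Cauchy--Schwarz inequality in the local norm, namely $\langle \ell, x-y\rangle \le \|\ell\|_{x,\psi}^*\,\|x-y\|_{x,\psi} = \rho\,\|\ell\|_{x,\psi}^*$. The substance of the argument is a matching quadratic lower bound $D_{\psi}(y,x)\ge c\,\rho^2$ for an absolute constant $c$, valid on the region $\rho\le 1/2$.

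To establish this lower bound I would use the integral representation of the Bregman divergence, $D_{\psi}(y,x)=\int_0^1 (1-s)\,\|y-x\|_{x_s,\psi}^2\,ds$ with $x_s := x + s(y-x)$, combined with the Hessian comparison of Lemma~\ref{lem:sc_hessian}. Since $\|x_s-x\|_{x,\psi}=s\rho\le \rho\le 1/2<1$, the right-hand semidefinite inequality of that lemma (applied with base point $x$ and point $x_s$) gives $\nabla^2\psi(x_s)\succeq (1-s\rho)^2\,\nabla^2\psi(x)$, hence $\|y-x\|_{x_s,\psi}^2\ge (1-s\rho)^2\rho^2$. Substituting this and using $\rho\le 1/2$ to bound $(1-s\rho)^2\ge(1-s/2)^2$ under the integral, I would compute $\int_0^1(1-s)(1-s/2)^2\,ds=\tfrac{17}{48}\ge \tfrac13$, which yields $D_{\psi}(y,x)\ge \tfrac13\rho^2$.

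Combining the two estimates gives $\langle \ell, x-y\rangle-\beta D_{\psi}(y,x)\le \rho\,\|\ell\|_{x,\psi}^*-\tfrac{\beta}{3}\rho^2$. Because this holds for the actual value $\rho=\|x-y\|_{x,\psi}$, I would then simply maximize the right-hand side over all $\rho\ge 0$; the maximum equals $\tfrac{3}{4\beta}\|\ell\|_{x,\psi}^{*2}\le\tfrac{2}{\beta}\|\ell\|_{x,\psi}^{*2}$, which is exactly the asserted bound. The slack $3/4\le 2$ shows that the constant $1/3$ in the divergence bound is comfortably sufficient.

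I expect the main obstacle to be the quadratic lower bound on $D_{\psi}(y,x)$: one must take care to invoke the correct direction of the semidefinite inequality in Lemma~\ref{lem:sc_hessian} and to verify that its hypothesis $\|x_s-x\|_{x,\psi}<1$ holds uniformly along the whole segment, which is precisely where the assumption $\rho\le 1/2$ enters. Everything after that---the Cauchy--Schwarz step and the scalar maximization---is routine.
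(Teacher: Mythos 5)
Your proof is correct and follows essentially the same route as the paper: both bound the linear term by Cauchy--Schwarz in the local norm and then use the Hessian comparison of Lemma~\ref{lem:sc_hessian} along the segment to get a quadratic lower bound $D_{\psi}(y,x) \geq c\,\|x-y\|_{x,\psi}^2$, the only differences being that the paper uses the Lagrange (mean-value) form of Taylor's theorem with constant $c = 1/8$ and an up-front AM--GM split, while you use the integral remainder (getting $c = 17/48 \geq 1/3$) and finish by maximizing the scalar quadratic in $\rho$ --- cosmetic variations on the same argument.
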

\begin{proof}
    Using the Cauchy-Schwarz inequality and the AM-GM inequality, we have
    \begin{align*}
        \langle \ell, x-y \rangle
        \le \|\ell\|_{x, \psi}^{*} \|x - y\|_{x, \psi}
        \le \frac{2}{\beta}\|\ell\|_{x, \psi}^{*2} + \frac{\beta}{8} \|x - y\|_{x, \psi}^2.
    \end{align*}
    Thus, it is sufficient to show $D_{\psi}(y, x) \ge \frac{1}{8}\|x - y\|_{x, \psi}^2$.

    By Taylor's theorem, we have $D_{\psi}(y,x) = \frac{1}{2}\|x-y\|_{\xi,\psi}^2$ for some $\xi = x + \alpha (y - x)$ where $\alpha \in (0,1)$.
    It follows from Lemma \ref{lem:sc_hessian} that
    \begin{align*}
        \|x-y\|_{\xi,\psi}^2
        \ge (1 - \|\xi - x\|_{x,\psi})^2 \|x-y\|_{x,\psi}^2
        = (1 - \alpha\|x-y\|_{x,\psi})^2 \|x-y\|_{x,\psi}^2
        \ge \frac{1}{4} \|x-y\|_{x,\psi}^2.
    \end{align*}
\end{proof}

\paragraph{Proof of Lemma~\ref{lem:boundstability}}
    Let $f(y) = D_{\psi}(y, x) - \langle \ell, x-y \rangle / \beta$.
    Since $\psi$ is self-concordant, there exists $y^* \in \mathrm{int}(\mathcal{X})$ such that $y^* \in \argmin_{y \in \mathcal{X}} f(y)$.
    If we have $\lambda(x, f) \le 1/3$, by Lemma~\ref{lem:nd_property}, we obtain
    \begin{align*}
        \|x - y^*\|_{x,\psi}
        = \|x - y^*\|_{x,f}
        \le \frac{\lambda(x, f)}{1 - \lambda(x, f)}
        \le 1/2.
    \end{align*}
    Thus, we obtain
    \begin{align*}
        \langle \ell, x-y \rangle - \beta D_{\psi}(y, x)
        \le \langle \ell, x-y^* \rangle - \beta D_{\psi}(y^*, x)
        \le \frac{2}{\beta}\|\ell\|_{x, \psi}^{*2},
    \end{align*}
    where the first inequality holds due to $y^* \in \argmin_{y \in \mathcal{X}} f(y)$
    and the second inequality is derived from Lemma~\ref{lem:stability_term_bound_prox}.
    Hence,
    it suffices to show $\lambda(x, f) \le 1/3$.
    By the definition of $f$, we have $\nabla f(x) = \ell / \beta$.
    Thus, we obtain
    \begin{align*}
        \lambda(x, f)
        = \|\nabla f(x)\|_{x,f}^*
        = \|\nabla f(x)\|_{x,\psi}^*
        = \|\ell\|_{x,\psi}^* / \beta
        \le 1/3,
    \end{align*}
    where the inequality is obtained by the assumption.
\qed

\section{Proof of Lemma~\ref{lem:RTgt}}
\label{sec:unbiased}
We first show that $\E \left[ a_t | x_t \right] = x_t$.
The expectation of $a_t$ is
\begin{align}
	\nonumber
	\E [a_t | x_t]
	&
	=
	\E [a'_t | x_t]
	=
	\left(1 -  r_t \right)
	\E [a'_t | b_t = 0]
	+
	r_t
	\E [a'_t | b_t = 1]
	\\
	\nonumber
	&
	=
	\left(1 -  r_t \right) z_t
	+
	r_t
	\E \left[
	z_t + r_t^{-1} \left( x_t + \epsilon_t \lambda_{i_t}^{-1/2} e_{i_t} - z_t\right)
	\right]
	\\
	&
	=
	\left(1 -  r_t \right) z_t
	+
	r_t
	(
	z_t + r_t^{-1} ( x_t  - z_t)
	)
	=
	z_t + (x_t - z_t) = x_t ,
	\label{eq:unbiasedat}
\end{align}
where the forth equality follows from $\E[\epsilon_t] = 0$.

Let us next show that $\hat{\ell}_t$ defined by \eqref{eq:defellhat} is an unbiased estimator of $\ell_t$.
We have
\begin{align}
	\nonumber
	&
	\E \left[ \hat{\ell}_t - m_t | x_t \right]
	=
	r_t
	\E \left[
	d \epsilon_t \lambda_{i_t}^{1/2}
	(f_t(a_t) - \linner m_t, a_t \rinner)
	e_{i_t}
	\right]
	\\
	&
	\nonumber
	=
	r_t
	\E \left[
	d \epsilon_t \lambda_{i_t}^{1/2}
	(\linner \ell_t - m_t, a_t \rinner + \xi_t )
	e_{i_t}
	\right]
	\\
	&
	\nonumber
	=
	r_t
	\E \left[
	d \epsilon_t \lambda_{i_t}^{1/2}
	e_{i_t}
	\linner
		z_t + r_t^{-1} (x_t + \epsilon_t \lambda_{i_t}^{-1/2} e_{i_t} - z_t),
		\ell_t - m_t
	\rinner
	\right]
	\\
	&
	\nonumber
	=
	r_t
	\E \left[
	d \epsilon_t \lambda_{i_t}^{1/2}
	e_{i_t}
	\linner
		r_t^{-1} \epsilon_t \lambda_{i_t}^{-1/2} e_{i_t} ,
		\ell_t - m_t
	\rinner
	\right]
	+
	r_t
	\E \left[
	d \epsilon_t \lambda_{i_t}^{1/2}
	e_{i_t}
	\linner
		z_t + r_t^{-1} (x_t - z_t),
		\ell_t - m_t
	\rinner
	\right]
	\\
	&
	=
	d
	\E \left[
	e_{i_t} e_{i_t}^\top (\ell_t - m_t)
	\right]
	=
	\ell_t - m_t ,
	\label{eq:unbiasedellhat}
\end{align}
where we used $\epsilon_t^2 = 1$, $\E[ \epsilon_t ] = 0$,
and the fact that $\epsilon_t$ and $m_t$ are independent
in the fifth equality.

	Suppose that $\min_{x \in \cX} \psi(x) = 0$ holds without loss of generality.
	Let $x_0 \in \argmin_{x \in \cX} \psi(x)$.
	Given $a^* \in \cA$,
	define $x^*$ by
	\begin{align*}
		x^*
		= \left( 1 - \frac{1}{T} \right) a^*
		+
		\frac{1}{T} x_0
		=
		a^*
		+
		\frac{1}{T} ( x_0 - a^* )
		.
	\end{align*}
	From this, \eqref{eq:unbiasedat} and \eqref{eq:unbiasedellhat},
	we have
	\begin{align}
		R_T(a^*)
		&
		=
		\E \left[
			\sum_{t=1}^T \linner \ell_t, a_t - a^* \rinner
		\right]
		=
		\E \left[
			\sum_{t=1}^T \linner \ell_t, a_t - x^* \rinner
		\right]
		+
		\E \left[
			\sum_{t=1}^T \linner \ell_t, x^* - a^* \rinner
		\right]
		\nonumber
		\\
		&
		=
		\E \left[
			\sum_{t=1}^T \linner \ell_t, a_t - x^* \rinner
		\right]
		+
		\frac{1}{T}
		\E \left[
			\sum_{t=1}^T \linner \ell_t, x_0 - a^* \rinner
		\right]
		\leq
		\E \left[
			\sum_{t=1}^T \linner \ell_t, a_t - x^* \rinner
		\right]
		+ 1
		\nonumber
		\\
		&
		=
		\E \left[
			\sum_{t=1}^T \linner \ell_t, x_t - x^* \rinner
		\right]
		+ 1
		=
		\E \left[
			\sum_{t=1}^T \linner \hat{\ell}_t, x_t - x^* \rinner
		\right]
		+ 1
		\label{eq:boundRT0}
	\end{align}
	Then,
	as we have $x_0 + ( 1 - 1/T )^{-1} (x^* - x_0) = x_0 + (a^* - x_0) = a^* \in \cA$,
	we have $\pi_{x_0}(x^*) \leq 1 - 1 / T$.
	Hence,
	from Lemma~\ref{lem:boundpsi},
	we have
	\begin{align*}
		\psi(x^*)
		=
		\psi(x^*) - \psi(x_0)
		\leq
		\theta \log \left( \frac{1}{1 - \pi_{x_0, \cX}(x^*) } \right)
		\leq
		\theta \log \left( \frac{1}{1 - (1 - 1/T) } \right)
		=
		\theta \log T.
	\end{align*}
	From this,
	\eqref{eq:boundRT0} and Lemma \ref{lem:OFTRL},
	we have
	\begin{align}
		R_T(a^*)
		\leq
		\E
		\left[
		\sum_{t=1}^T
		\left(
			\linner
				\hat{\ell}_t - m_t,
				x_t - x'_{t+1}
			\rinner
			- \beta_t D( x'_{t+1}, x_t )
		\right)
		+
		\beta_{T+1}
		\theta \log T
		\right]
		+
		1 .
		\label{eq:boundRT1}
	\end{align}
	The part of $
			\linner
				\hat{\ell}_t - m_t,
				x_t - x'_{t+1}
			\rinner
			- \beta_t D( x'_{t+1}, x_t )
	$ can bounded by using Lemma~\ref{lem:boundstability}.
	From the definition \eqref{eq:defellhat},
	we have
	\begin{align}
		\nonumber
		\| \hat{\ell}_t - m_t \|_{x_t, \psi}^{*2}
		&
		=
		(\hat{\ell}_t - m_t) ^\top (\nabla^2 \psi(x_t))^{-1} (\hat{\ell}_t - m_t)
		\\
		&
		\nonumber
		=
		b_t d^2 ( f_t(a_t) - \linner m_t, a_t \rinner )^2
		\lambda_{i_t}
		e_{i_t}^\top (\nabla^{2} \psi(x_t))^{-1} e_{i_t}
		\\
		&
		=
		b_t d^2 ( f_t(a_t) - \linner m_t, a_t \rinner )^2
		=
		d^2 g_t(m_t)
		\leq
		4 b_t d^2
		\leq
		4
		d^2.
		\label{eq:boundstability2}
	\end{align}
	Hence,
	if $\beta_t \geq 6 d$,
	we have $  \| \hat{\ell}_t - m_t \|_{x_t, \psi}^{*} \leq \beta_t/ 3 $,
	and, consequently,
	we can apply Lemma~\ref{lem:boundstability} to bound the stability term as follows:
	\begin{align*}
		\linner
			\hat{\ell}_t - m_t,
			x_t - x'_{t+1}
		\rinner
		- \beta_t D( x'_{t+1}, x_t )
		\leq
		\frac{2}{\beta_t}
		\| \hat{\ell}_t - m_t \|_{\psi, x_t}^{*2}
		=
		\frac{2 d^2 g_t(m_t)}{\beta_t},
	\end{align*}
	where $g_t(m)$ is defined in \eqref{eq:defgt}.
	Then,
	from this and \eqref{eq:boundRT1},
	we have
	\begin{align}
		R_T(a^*)
		\leq
		\E \left[
			2
			\sum_{t=1}^T \frac{d^2 g_t(m_t)}{\beta_t}
			+
			\beta_{T+1} \theta \log T
		\right]
		+ 1 .
		\label{eq:stabpenal}
	\end{align}
	If $\beta_t$ is given by \eqref{eq:defbetat},
	we then have
	\begin{align*}
		\frac{d^2 g_t(m_t)}{\beta_t}
		&
		=
		d
		\frac{ \sqrt{\theta \log T} g_t(m_t)}{ 6 \sqrt{\theta \log T} + 2 \sqrt{\sum_{s=1}^{t-1} g_s(m_s) }}
		\leq
		d
		\frac{ \sqrt{\theta \log T} g_t(m_t)}{ \sqrt{\sum_{s=1}^{t-1} g_s(m_s) + 36 \theta \log T} + \sqrt{\sum_{s=1}^{t-1} g_s(m_s)   } }
		\\
		&
		\leq
		d
		\frac{ \sqrt{\theta \log T} g_t(m_t)}{
			 \sqrt{\sum_{s=1}^{t} g_s(m_s) } + \sqrt{\sum_{s=1}^{t-1} g_s(m_s) } 
			 }
		=
		\sqrt{
		\theta \log T
		}
		\left( 
			\sqrt{\sum_{s=1}^{t} g_s(m_s) } - \sqrt{\sum_{s=1}^{t-1} g_s(m_s) } 
		\right).
	\end{align*}
	which yields
	\begin{align}
		\sum_{t=1}^T \frac{d^2 g_t(m_t)}{\beta_t}
		\leq
		\sqrt{
		\theta \log T
		}
		\sum_{t=1}^T
		\left( 
			\sqrt{\sum_{s=1}^{t} g_s(m_s) } - \sqrt{\sum_{s=1}^{t-1} g_s(m_s) } 
		\right)
		=
		\sqrt{ \theta \log T \cdot \sum_{t=1}^{T} g_t(m_t)  } .
		\label{eq:sumstab}
	\end{align}
	We also have
	\begin{align*}
		\beta_{T+1}\theta \log T
		=
		2
		\sqrt{ \theta \log T \cdot \sum_{t=1}^{T} g_t(m_t)  } 
		+
		6d \theta \log T
	\end{align*}
	from the definition \eqref{eq:defbetat} of $\beta_t$.
	Combining this with \eqref{eq:stabpenal} and \eqref{eq:sumstab},
	we obtain
	\begin{align*}
		R_T(a^*)
		\leq
		4 d \E \left[ \sqrt{\theta \log T \cdot \sum_{t=1}^T g_t(m_t)} \right] + 6d \theta \log T + 1,
	\end{align*}
	which completes the proof.

\section{Proof of Theorem~\ref{thm:adv}}
\label{sec:proofadv}
Fix $\eta \in (0, 1/4)$ arbitrarily.
By substituting $u_t = \bar{\ell} \in \argmin_{\ell} \sum_{t=1}^T \| \ell_t - {\ell} \|_2^2$ for all $t$ in \eqref{eq:TLE},
we obtain
\begin{align*}
	\sum_{t=1}^T g_t (m_t)
	&
	=
	O \left(
		\sum_{t=1}^T g_t( \bar{\ell}_T )
		+
		1
	\right)
	=
	O \left(
		\sum_{t=1}^T b_t \left( \linner \ell_t - \bar{\ell}_T,  a_t \rinner + \varepsilon_t(a_t) \right)^2
		+
		1
	\right)
	\\
	&
	=
	O \left(
		\sum_{t=1}^T  \left( \| \ell_t - \bar{\ell}_T \|_2^2 + ( \varepsilon_t(a_t))^2 \right)
		+
		1
	\right).
\end{align*}
Similarly,
by substituting $u_t = \ell_t$,
we obtain
\begin{align*}
	\sum_{t=1}^T g_t (m_t)
	=
	O \left(
		\sum_{t=1}^T  ( \varepsilon_t(a_t))^2
		+
		\sum_{t=1}^{T-1}  \left( \| \ell_t - \ell_{t+1} \|_2 \right)
		+
		1
	\right).
\end{align*}
By combining these with Lemma~\ref{lem:RTgt} and applying Jensen's inequality,
we obtain \eqref{eq:adv1}.
Further,
if $f_t(a) \geq 0$,
by substituting $u_t = 0$,
we obtain
\begin{align*}
	\sum_{t=1}^T g_t(m_t)
	=
	O \left(
		\sum_{t=1}^T g_t(0)
		+
		1
	\right)
	=
	O \left(
		\sum_{t=1}^T b_t ( f_t(a_t) )^2
		+
		1
	\right)
	=
	O \left(
		\sum_{t=1}^T f_t(a_t)
		+
		1
	\right) .
\end{align*}
By combining this with Lemma~\ref{lem:RTgt},
we obtain
\begin{align*}
	R_T(a^*)
	&
	=
	O \left(
		d \sqrt{ \theta  \log T
		\left(
			\E \left[
			\sum_{t=1}^T f_t(a_t)
			\right]
			+
			1
		\right)
		}
	\right)
	\\
	&
	=
	O \left(
		d \sqrt{ \theta  \log T
		\left(
			R_T(a^*)
			+
			\E \left[
			\sum_{t=1}^T f_t(a^*)
			\right]
			+
			1
		\right)
		}
	\right)
\end{align*}
which implies that \eqref{eq:adv2} holds.
\qed

\section{Proof of Lemma~\ref{lem:boundgamma}}
As $\cX$ is the convex hull of $\cA' = \{ a^* \} \cup \conv ( \cA \setminus \{ a^* \} )$,
any point $y \in \cX$ can be expressed as
a convex combination of $a^*$ and a point in $\conv( \cA \setminus \{ a^* \})$,
which means that there exists $\lambda \in [0, 1]$ and
$x' \in \conv( \cA \setminus \{ a^* \} )$ such that
$x = \lambda x' + (1 - \lambda) a^*$.
For such $x$,
we have
\begin{align}
	\label{eq:boundpi0}
	\pi_{a^*, \cX} (x) \leq \lambda.
\end{align}
In fact,
we have
\begin{align*}
	a^* +  \lambda^{-1}( x - a^* )
	=
	a^* + \lambda^{-1} \lambda ( x' - a^* )
	=
	x' \in \cX ,
\end{align*}
which means that \eqref{eq:boundpi0} holds.
We further have
\begin{align*}
	\Delta(x)
	=
	\lambda \Delta(x')
	+
	(1-\lambda) \Delta (a^*)
	=
	\lambda \Delta(x')
	\geq
	\lambda \Delta_{\min},
\end{align*}
where the last inequality follows from the fact that
$x' \in \conv(\cA \setminus \{ a^* \})$ and the definition of $\Delta_{\min}$.
Combining this with \eqref{eq:boundpi0},
we obtain
\begin{align}
	\label{eq:140}
	\pi_{a^*, \cX} (x) \leq \frac{\Delta (x)}{\Delta_{\min}} .
\end{align}
We next show
\begin{align}
	\label{eq:141}
	\max_{ x \in W_1(y)} \Delta(x)
	\leq
	2 \Delta(y).
\end{align}
As $W_1(y)$ is an ellipsoid centered at $y$,
it holds that
\begin{align*}
	\linner \ell^*, y \rinner
	-
	\min_{x \in W_1(y)} \linner \ell^*, x \rinner
	=
	\max_{x \in W_1(y)} \linner \ell^*, x \rinner
	-
	\linner \ell^*, y \rinner.
\end{align*}
We hence have
\begin{align*}
	\max_{x \in W_1(y)} \Delta(x)
	&
	=
	\max_{x\in W_1(y)}
	\linner \ell^*, x \rinner
	-
	\linner \ell^*, y \rinner
	+
	\Delta(y)
	\\
	&
	=
	\linner \ell^*, y \rinner
	-
	\min_{x\in W_1(y)}
	\linner \ell^*, x \rinner
	+
	\Delta(y)
	\\
	&
	\leq
	\linner \ell^*, y \rinner
	-
	\min_{x \in \cX}
	\linner \ell^*, x \rinner
	+
	\Delta(y)
	=
	2 \Delta(y),
\end{align*}
where the inequality follows from the fact that $W_1(y) \subseteq \cX$.
Combining \eqref{eq:140} and \eqref{eq:141},
we obtain
\begin{align*}
	\max_{x \in W_1(y)}
	\pi_{a^*, \cX} (x)
	\leq
	\max_{x \in W_1(y)}
	\frac{\Delta(x)}{\Delta_{\min}}
	\leq
	2 \frac{\Delta(y)}{\Delta_{\min}} .
\end{align*}

\section{Proof of Theorem~\ref{thm:sto}}
\label{sec:prfsto}
From Lemma~\ref{lem:TLE} with $u_t = \ell^*$,
we have
\begin{align*}
	\E \left[
		\sum_{t=1}^T g_t (m_t)
	\right]
	&
	=
	O\left(
		\E \left[
			\sum_{t=1}^T g_t (\ell^*)
		\right]
		+
		1
	\right)
	\leq
	O\left(
		\E \left[
			\sum_{t=1}^T b_t ( \linner \ell_t - \ell^*, a_t \rinner + \varepsilon_t( a_t ) )^2
		\right]
		+
		1
	\right)
	\\
	&
	\leq
	O\left(
		\E \left[
			\sum_{t=1}^T
			\left( \| \ell_t - \ell^* \|_2^2 +
			r_t \sigma_{t}^2
			\right)
		\right]
		+
		1
	\right)
	\leq
	O\left(
		\E \left[
			C +
			\sum_{t=1}^T
			r_t \sigma_{t}^2
		\right]
		+
		1 
	\right) .
\end{align*}
From this and Lemma~\ref{lem:RTgt},
we have
\begin{align}
	\label{eq:prfsto1}
	R_T = O \left(
		d
		\sqrt{ \theta \log T \cdot \E \left[ C + \sum_{t=1}^T \sigma_t^2 r_t \right]}
		+
		d \theta \log T
	\right).
\end{align}
Under the assumption of \eqref{eq:aspzt},
we have
\begin{align*}
	r_t
	\leq
	\kappa \cdot \min_{z \in \cA} \left\{  \max_{x \in \cE_t} \pi_{z, \cX}(x) \right\}
	\leq
	\kappa \cdot \min_{z \in \cA} \left\{ \max_{x \in W_1( x_t )} \pi_{z, \cX}(x) \right\}
	\leq
	2
	\kappa 
	\frac{\Delta(x_t)}{\Delta_{\min}},
\end{align*}
where second inequality follows from $\cE_t \subseteq W_1( x_t )$
and the last inequality follows from Lemma~\ref{lem:boundgamma}.
From this, \eqref{eq:prfsto1}
and $\sigma^2 = \max_{t\in [T]}\sigma_t^2$,
we have
\begin{align}
	R_T
	=
	O \left(
	d \sqrt{ \theta \log T \cdot \E \left[ C + \frac{\kappa \sigma^2}{\Delta_{\min}} \sum_{t=1}^T \Delta (x_t) \right] }
	+
	d \theta \log T
	\right) .
	\label{eq:boundRTu}
\end{align}
On the other hand,
$R_T$ is bounded from below as follows:
\begin{align}
	R_T (a^*)
	&
	=
	\sum_{t=1}^T
	\E \left[ \linner \ell_t, a_t - a^* \rinner \right]
	=
	\sum_{t=1}^T
	\E \left[ \linner \ell^*, a_t - a^* \rinner
	+
	\linner \ell_t - \ell^*, a_t - a^* \rinner
	\right]
	\nonumber
	\\
	&
	\geq
	\sum_{t=1}^T
	\E \left[ \linner \ell^*, a_t - a^* \rinner
	-
	2 \| \ell_t - \ell^* \|_2
	\right]
	=
	\E \left[
		\sum_{t=1}^T
		\linner \ell^*, x_t - a^* \rinner
		-
		2 C
	\right]
	\nonumber
	\\
	&
	=
	\E \left[
		\sum_{t=1}^T
		\Delta(x_t)
		-
		2 C
	\right] .
	\label{eq:boundRTl}
\end{align}
Combining this with \eqref{eq:boundRTu},
we obtain
\begin{align*}
	R_T(a^*)
	&
	=
	O \left(
		d \sqrt{
			\theta \log T
			\cdot
				\left(
				C +
				\frac{\kappa \sigma^2}{\Delta_{\min}}
				(R_T + C)
				\right)
		}
		+
		d \theta \log T
	\right) 
	\\
	&
	=
	O \left(
		d \sqrt{
			\frac{ \theta \kappa \sigma^2 \log T}{\Delta_{\min}}
			R_T(a^*)
		}
		+
		d
		\sqrt{
			\left(
				\frac{\kappa \sigma^2}{\Delta_{\min}} + 1
			\right)
			C
			\theta \log T
		}
		+
		d \theta \log T
	\right) .
\end{align*}
As $X = O( \sqrt{AX} + B)$ implies $X = O(A + B)$,
we have
\begin{align*}
	R_T(a^*)
	&
	=
	O \left(
		\frac{ d^2 \theta \kappa \sigma^2 \log T}{\Delta_{\min}}
		+
		d
		\sqrt{
			\left(
				\frac{\kappa \sigma^2}{\Delta_{\min}} + 1
			\right)
			C
			\theta \log T
		}
		+
		d \theta \log T
	\right)
	\\
	&=
	O \left(
		\left(
		\frac{ d \kappa \sigma^2}{\Delta_{\min}}
		+
		1
		\right)
		d
		\theta 
		\log T
		+
		d
		\sqrt{
			\left(
				\frac{\kappa \sigma^2}{\Delta_{\min}} + 1
			\right)
			C
			\theta \log T
		}
	\right) .
\end{align*}

\end{document}